\documentclass[11pt]{article}
\usepackage[margin=1in]{geometry}
\usepackage{dsfont}
\linespread{1.07}
\usepackage{booktabs, amsmath, amsthm, colortbl}
\newcommand{\R}{\mathbb{R}}
\newcommand{\smallpar}[1]{\medskip\noindent\textbf{#1.}\hspace{1ex}}

\newcommand{\C}{\mathcal{C}}
\newcommand{\X}{\mathcal{X}}

\newcommand{\T}{\mathsf{T}}
\newcommand{\leaves}{\mathsf{leaves}}
\newcommand{\pur}{\mathsf{pur}}
\newcommand{\cost}{\mathsf{cost}}
\newcommand{\DP}{\mathsf{DP}}
\newcommand{\MW}{\mathsf{MW}}
\newcommand{\calP}{\mathcal{P}}

\usepackage{color}
\usepackage{hyperref}
\usepackage[dvipsnames,svgnames,x11names,hyperref]{xcolor}
\definecolor{blueish}{HTML}{007F99}
\definecolor{purpleish}{HTML}{72177A}
\hypersetup{
	colorlinks,
	citecolor=purpleish,
	filecolor=red,
	linkcolor=blueish,
	urlcolor=Blue
}

\newcommand{\norm}[1]{\left\lVert#1\right\rVert}
\newtheorem{Theorem}{Theorem}
\newtheorem{Corollary}{Corollary}
\newtheorem{Lemma}{Lemma}
\usepackage[labelformat=simple]{subcaption}

\usepackage{natbib}

\usepackage[utf8]{inputenc} 
\usepackage[T1]{fontenc}    
\usepackage{hyperref}       
\usepackage{url}            
\usepackage{booktabs}       
\usepackage{amsfonts}       
\usepackage{nicefrac}       
\usepackage{microtype}      
\usepackage{graphicx}
\usepackage{bbm}
\usepackage{graphicx}
\usepackage{chemfig}

\begin{document}

\title{Unsupervised Embedding of Hierarchical Structure\\ in Euclidean Space\thanks{Jinyu Zhao and Yi Hao contributed equally.}}
	\author{
	}
	\date{\today}
	
	\author{Jinyu Zhao%
 		\thanks{%
 			{Dept. of Electrical \& Computer Engineering, University of California, San Diego (\url{jiz077@eng.ucsd.edu}).}
 			}
 		\and Yi Hao%
 		\thanks{%
 			{Dept. of Electrical \& Computer Engineering, University of California, San Diego (\url{yih179@eng.ucsd.edu}).}}
 		\and Cyrus Rashtchian%
 		\thanks{%
 			{Dept. of Computer Science \& Engineering, University of California, San Diego (\url{crashtchian@eng.ucsd.edu})}.}
 		    }
	
	\maketitle

\begin{abstract}
Deep embedding methods have influenced many areas of unsupervised learning. However, the best methods for learning hierarchical structure use non-Euclidean representations, whereas Euclidean geometry underlies the theory behind many hierarchical clustering algorithms. To bridge the gap between these two areas, we consider learning a non-linear embedding of data into Euclidean space as a way to improve the hierarchical clustering produced by agglomerative algorithms. To learn the embedding, we revisit using a variational autoencoder with a Gaussian mixture prior, and we show that rescaling the latent space embedding and then applying Ward's linkage-based algorithm leads to improved results for both dendrogram purity and the Moseley-Wang cost function. Finally, we complement our empirical results with a theoretical explanation of the success of this approach. We study a synthetic model of the embedded vectors and prove that Ward's method exactly recovers the planted hierarchical clustering  with high probability. 
\end{abstract}

\section{Introduction}

Hierarchical clustering aims to find an iterative grouping of increasingly large subsets of data, terminating when one cluster contains all of the data. This results in a tree structure, where each level determines a partition of the data. Organizing the data in such a way has many applications, including automated taxonomy construction and phylogeny reconstruction~\cite{balaban2019treecluster, friedman2001elements, manning2008introduction, shang2020nettaxo, zhang2018taxogen}. Motivated by these applications, we address the simultaneous goals of recovering an underlying clustering and building a hierarchical tree of the entire dataset. We imagine that we have access to many samples from each individual cluster (e.g., images of plants and animals), while we lack labels or any direct information about the hierarchical relationships. In this scenario, our objective is to correctly cluster the samples in each group and also build a dendrogram within and on top of the clusters that matches the natural supergroups.

Despite significant effort, hierarchical clustering remains a challenging task, theoretically and empirically. Many objectives are NP-Hard~\citep{charikar2017approximate,charikar2018hierarchical,  dasgupta2002performance, dasgupta2016cost,heller2005bayesian, MoseleyWang}, and many theoretical algorithms may not be practical because they require solving a hard subproblem (e.g., sparsest cut) at each step~\citep{ackermann2014analysis, ahmadian2019bisect, charikar2017approximate, charikar2018hierarchical,  dasgupta2016cost, lin2010general}. The most popular hierarchical clustering algorithms are linkage-based methods~\cite{king1967step, lance1967general, sneath1973numerical, ward1963hierarchical}. These algorithms first partition the dataset into singleton clusters. Then, at each step, the pair of most similar clusters is merged. There are many variants depending on how the cluster similarity is computed. While these heuristics are widely used, they do not always work well in practice. At present, there is only a superficial understanding of theoretical guarantees, and the central guiding principle is that linkage-based methods recover the underlying hierarchy as long as the dataset satisfies certain strong separability assumptions~\citep{abboud2019subquadratic, balcan2019learning, Cohen-Addad, emamjomeh2018adaptive, ghoshdastidar2019foundations, grosswendt2019analysis,kpotufepruning,reynolds2006clustering, roux2018comparative, sharma2019comparative,szekely2005hierarchical, tokuda2020revisiting}. 

We consider embedding the data in Euclidean space and then clustering with a linkage-based method. Prior work has shown that a variational autoencoder (VAE) with Gaussian mixture model (GMM) prior leads to separated clusters via the latent space embedding~\cite{dilokthanakul2016deep,nalisnick2016approximate,jiang2017variational,uugur2020variational, xie2016unsupervised}. 
We focus on one of the best models, Variational Deep Embedding~(VaDE)~\cite{jiang2017variational}, in the context of hierarchical clustering. Surprisingly, the VaDE embedding seems to capture hierarchical structure. Clusters that contain semantically similar data end up closer in Euclidean distance, and this pairwise distance structure occurs at multiple scales. This phenomenon motivates our empirical and theoretical investigation of enhancing hierarchical clustering quality with unsupervised embeddings into Euclidean space.

Our experiments demonstrate that applying Ward's method after embedding with VaDE produces state-of-the-art results for two hierarchical clustering objectives: (i) dendrogram purity for classification datasets~\cite{heller2005bayesian} and (ii) Moseley-Wang's cost function~\cite{MoseleyWang}, which is a maximization variant of Dasgupta's cost~\cite{dasgupta2016cost}. Both measures reward clusterings for merging more similar data points before merging different ones. For brevity, we restrict our attention to these two measures, while our qualitative analysis suggests that the VaDE embedding would also improve other objectives.

As another contribution, we propose an extension of the VaDE embedding that rescales the means of the learned clusters (attempting to increase cluster separation). On many real datasets, this rescaling improves both clustering objectives, which is consistent with the general principle that linkage-based methods perform better with more separated clusters. The baselines for our embedding evaluation involve both a standard VAE with isotropic Gaussian prior~\cite{kingma2014auto, rezende2014stochastic} and principle component analysis (PCA). In general, PCA provides a worse embedding than VAE/VaDE, which is expected because it cannot learn a non-linear transformation. The VaDE embedding leads to better hierarchical clustering results than VAE, indicating that the GMM prior offers more flexibility in the latent space. 

Our focus on Euclidean geometry lends itself to a synthetic distribution for hierarchical data. It will serve both as a challenging evaluation distribution and as a model of the VaDE embedding. We emphasize that we implicitly represent the hierarchy with pairwise Euclidean distances, bypassing the assumption that the algorithm has access to actual similarities. Extending our model further, we demonstrate a shifted version, where using a non-linear embedding  of the data improves the clustering quality.
Figure~\ref{synthetic_data-1} depicts the original and shifted synthetic data with 8 clusters forming a 3-level hierarchy. The 3D plot of the original data in Figure~\ref{fig:3d-1} has ground truth pairwise distances in Figure~\ref{fig:truth-1}. In Figure~\ref{fig:shift-orig-1}, we see the effect of {\em shifting} four of the means by a cyclic rotation, e.g., $(4,2,1,0,0,0)\mapsto (0,0,0,4,2,1)$. This non-linear transformation increases the distances between pairs of clusters while preserving hierarchical structure. PCA in Figure~\ref{fig:shift-pca-1} can only distinguish between two levels of the hierarchy, whereas VaDE in Figure~\ref{fig:shift-vade-1} leads to  concentrated clusters and identifies multiple levels. 

To improve the theoretical understanding of Ward's method, we prove that it exactly recovers both the underlying clustering and the planted hierarchy when the cluster means are hierarchically separated. The proof strategy involves bounding the distance between points sampled from spherical Gaussians and extending recent results on Ward's method for separated data~\cite{grosswendt2019analysis, grosswendt_2020}. We posit that the rescaled VaDE embedding produces vectors that resemble our planted hierarchical model, and this enables Ward's method to recover the clusters and the hierarchy after the embedding.

\subsection{Related Work}
For flat clustering objectives (e.g., $k$-means or clustering accuracy), there is a lot of research on using embeddings to obtain better results by increasing cluster separation~\cite{aljalbout2018clustering, dilokthanakul2016deep, fard2018deep, figueroa2017simple, guo2017deep,jiang2017variational, li2018discriminatively,li2018learning, min2018survey, tzoreff2018deep, xie2016unsupervised, yang2017towards}. However, we are unaware of prior work on learning unsupervised embeddings to improve hierarchical clustering. Studies on hierarchical representation learning instead focus on metric learning or maximizing data likelihood, without evaluating downstream unsupervised hierarchical clustering algorithms~\cite{ganea2018hyperbolic,
goldberger2005hierarchical, goyal2017nonparametric, heller2005bayesian, klushyn2019learning, nalisnick2017stick, shin2019hierarchically, sonthalia2020tree, teh2008bayesian,tomczak2018vae, vasconcelos1999learning}. Our aim is to embed the data into Euclidean space 
so that linkage-based algorithms produce high-quality clusterings.

Researchers have recently identified the benefits of hyperbolic space for representing hierarchical structure~\citep{chami2020trees, de2018representation, gu2018learning, linial1995geometry, mathieu2019continuous, monath2019gradient, nickel2017poincare, tifrea2018poincar}. Their motivation is that hyperbolic space can better represent tree-like metric spaces due to its negative curvature~\citep{sarkar2011low}. We offer an alternative perspective. If we commit to using a linkage-based method to recover the hierarchy, then it suffices to study Euclidean geometry. Indeed, we do not need to approximate all pairwise distances --- we only need to ensure that the clustering method produces a good approximation to the true hierarchical structure.  Our approach facilitates the use of off-the-shelf algorithms designed for Euclidean geometry, whereas other methods, such as hyperbolic embeddings, require new tools for downstream tasks. 

Regarding our synthetic model, prior work has studied analogous models that sample pairwise {\em similarities} from separated Gaussians~\cite{balakrishnan2011noise, Cohen-Addad, ghoshdastidar2019foundations}. Due to this key difference, previous techniques for similarity-based planted hierarchies do not apply to our data model. We extend the prior results to analyze Ward's method directly in Euclidean space (consistent with the embeddings).


\begin{figure*}[t!]
    \centering
    \begin{subfigure}[t]{0.44\textwidth}
        \centering
        \includegraphics[height=1.5in]{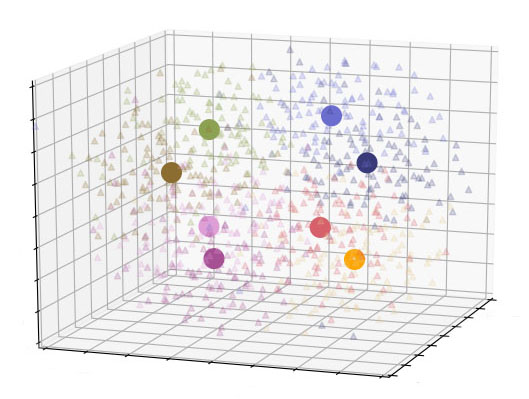}
        \caption{3D plot of three-level hierarchy (8 clusters)}
        \label{fig:3d-1}
    \end{subfigure}%
    ~ 
    \begin{subfigure}[t]{0.44\textwidth}
        \centering
        \includegraphics[height=1.5in]{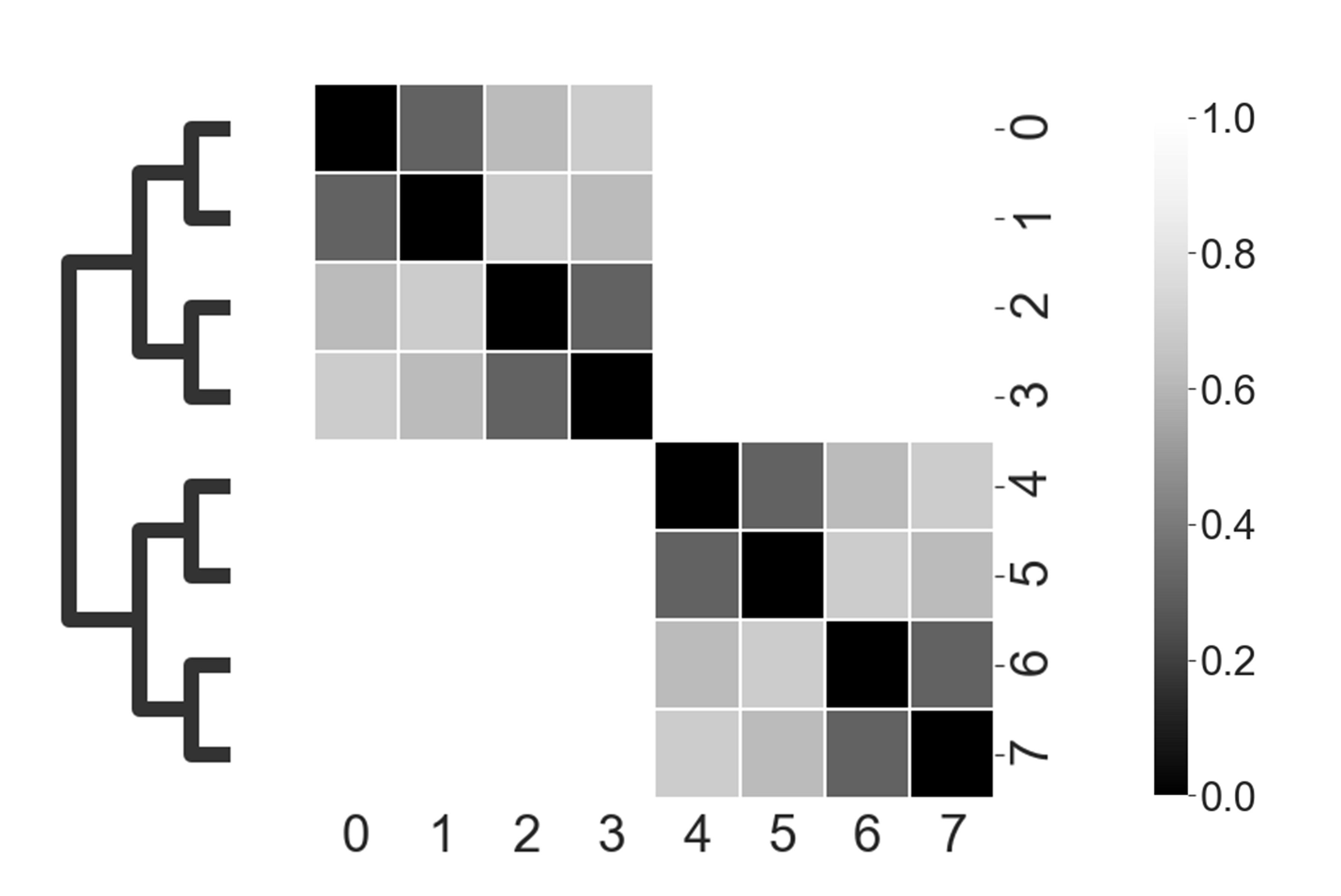}
        \caption{Ground truth dendrogram \& distance matrix\vspace{1ex}}
        \label{fig:truth-1}
    \end{subfigure}
    \begin{subfigure}[t]{0.3\textwidth}
        \centering
        \includegraphics[height=1.3in]{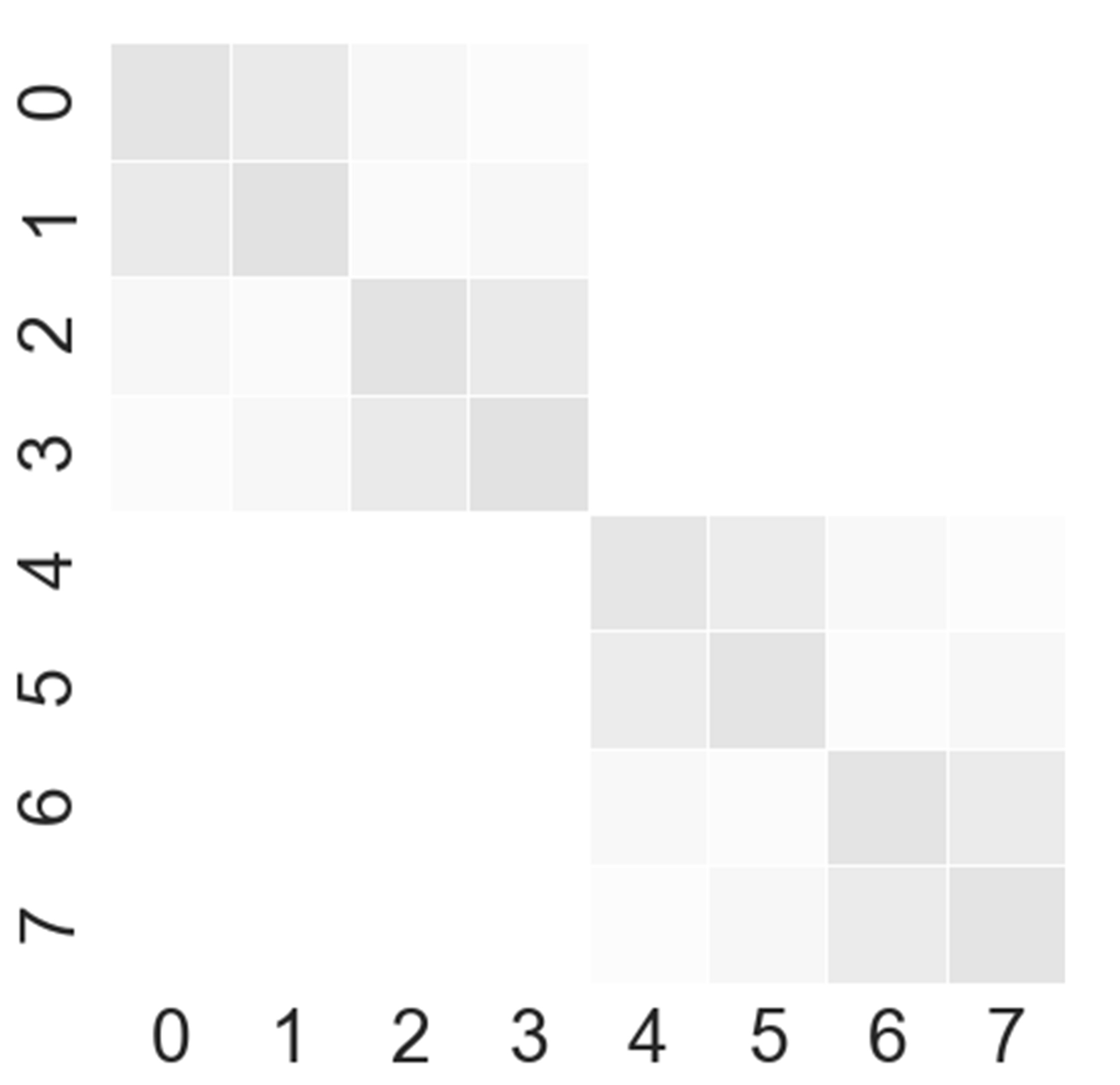}
        \caption{Dataset with shifted means}
        \label{fig:shift-orig-1}
    \end{subfigure}%
    ~ 
    \begin{subfigure}[t]{0.3\textwidth}
        \centering
        \includegraphics[height=1.3in]{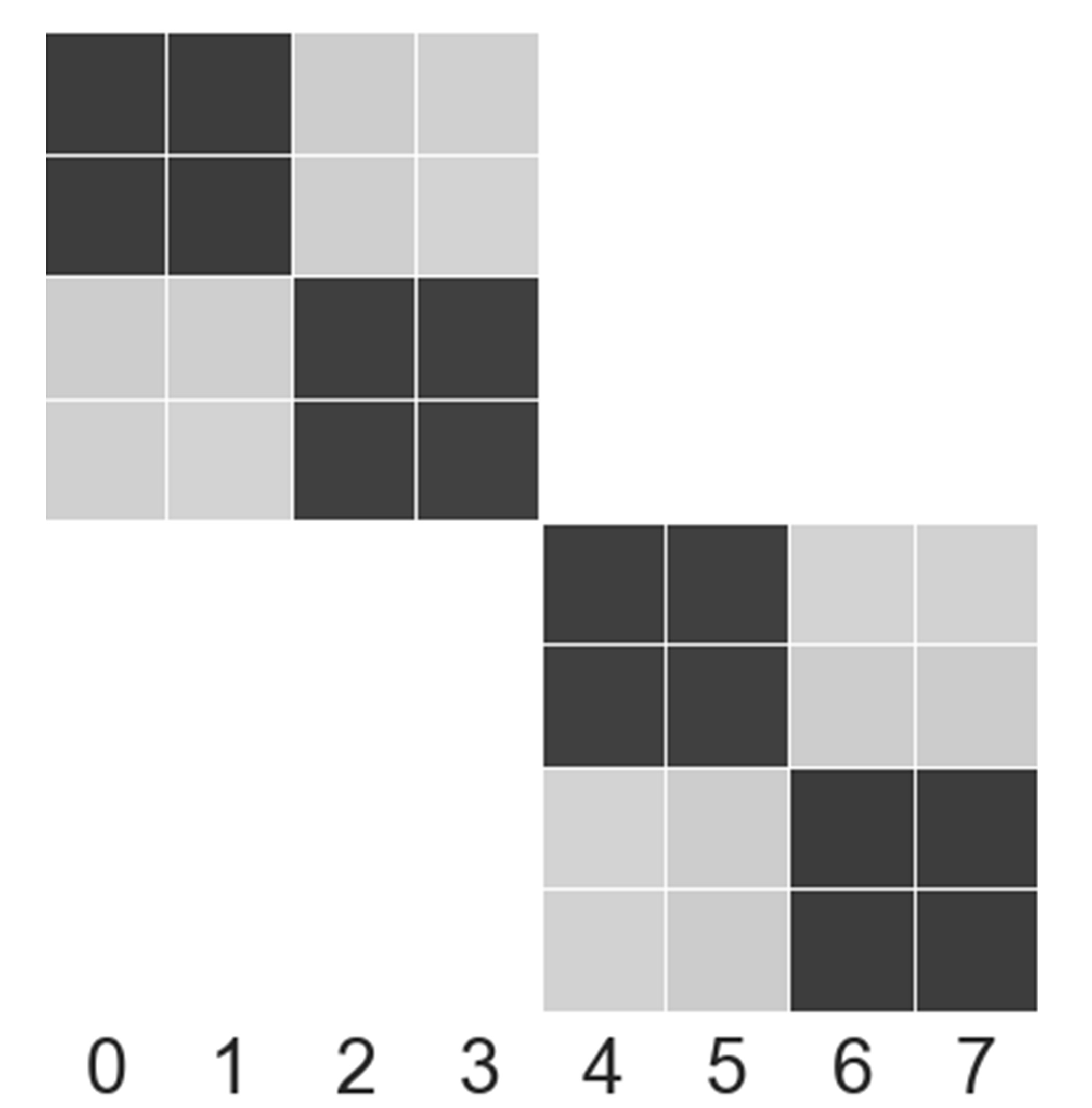}
        \caption{PCA on shifted dataset}
        \label{fig:shift-pca-1}
    \end{subfigure}
    ~
    \begin{subfigure}[t]{0.3\textwidth}
        \centering
        \includegraphics[height=1.3in]{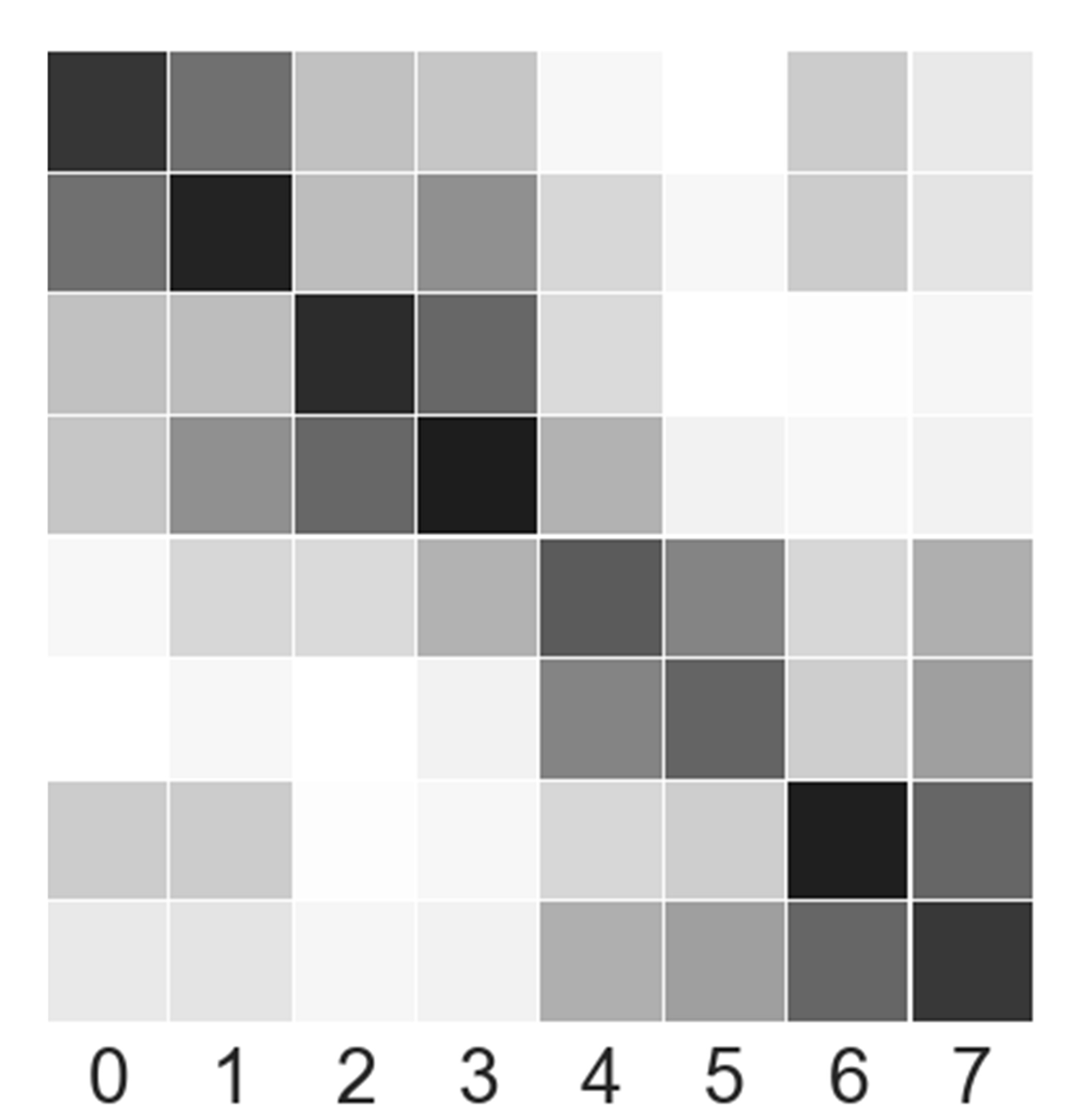}
        \caption{VaDE on shifted dataset}
        \label{fig:shift-vade-1}
    \end{subfigure}
    \caption{{\bf Top row:} Synthetic data demonstrating a three-level hierarchy encoded via the pairwise Euclidean distances between eight clusters. {\bf Bottom row:} Effect of shifting half of the means by cyclically rotating the vectors, e.g., $(4,2,1,0,0,0)\mapsto (0,0,0,4,2,1)$. Comparing PCA and VaDE shows that a non-linear embedding can increase cluster separation and roughly preserve the hierarchy.}
\label{synthetic_data-1}
\end{figure*}


\section{Clustering Models and Objectives}
\label{sec:prelims}

\smallpar{Planted Hierarchies with Noise} 
We introduce a simple heuristic of constructing a mixture of Gaussians that also encodes hierarchical structure via Euclidean distances. In particular, the resulting new construction (i) approximately models the latent space of VaDE, and (ii) allows us to test the effectiveness of VaDE in increasing the separation among clusters.

We shall refer to our model as {\bf BTGM}, Binary Tree Gaussian Mixture, formed by a uniform mixture of spherical Gaussians with shared variance and different means. The model is determined by four parameters: height $h$, margin $m$, and dimension $d$, and expansion ratio $\alpha$, a constant satisfying $\alpha > 1$. 
Specifically, in a BTGM with height $h$ and margin $m$, we construct $2^h$ spherical Gaussian distributions with  unit variance $I_d$ and mean vectors $\mu_{i}\in \mathbb R^d$, 
whose $j^{th}$ coordinate $\mu_i[j]$ is specified by the following equation.
\begin{equation}\label{eq:btgm}
    \mu_i[j] = 
    \begin{cases}
        (-1)^{p_{i,j}} \cdot m \cdot \alpha^{h-j}, 
        \textrm{ for }\\ 
        \hspace{2em}p_{i,j}\equiv i \!\!\!\!\pmod{2^{h - i}} \textrm{  and  } j \leq h\\  0, \ \textrm{if } h < j \leq d.
    \end{cases}
\end{equation}\par
\vspace{-0.6em}
At a high level, we constructed a complete binary tree of height $h$, with each leaf corresponding to a spherical Gaussian distribution. 
A crucial structural property is that the distance between any two means, $\mu_i$ and $\mu_j$, is positively related to their distance in the binary tree. More precisely, 
$$\norm{\mu_i - \mu_j}_2 \geq  m \cdot \alpha^ {\mathsf{ht}(\mathsf{LCA}(\mu_i, \mu_j))},$$
where $\mathsf{ht}(v)$ denotes the height of node $v$ in the tree, and $\mathsf{LCA}$ is the least common ancestor of two nodes. Figure~\ref{synthetic_data-1} depicts an instance of the BTGM model with parameters $m = 4$, $h = 3$, $\alpha=2$, and $d = 100$. Figure~\ref{fig:3d-1} presents the 3D tSNE visualization of the data, and Figure~\ref{fig:truth-1} shows the pairwise distance matrix (normalized to $[0,1]$) of the clusters and the ground-truth hierarchy. We also define a non-linear shifting operation on the BTGM, which increases the difficultly of recovering the structure and allows more flexibility in experiments. Specifically, we shift the nonzero mean-vector entries of the first $\frac{k}{2}$ BTGM components to arbitrary new locations, 
e.g., we may shift $\mu_1$ from $(4,2,1,0,0,0)$ to $ (0,0,0,4,2,1)$ via rotation in a BTGM with $k = 8, h = 3$, $\alpha=2$, and $m = 2$. After the operation, the model's hierarchical structure is less clear from the mean distances in the $h$-dimensional Euclidean subspace. Other non-linear  operations can be applied to model different data types.

\smallpar{Data Separability}
Our synthetic model and our theoretical results assume that the dataset satisfies certain separability requirements. Such properties do not hold for real datasets {\em before} applying the embedding. Fortunately, the assumptions are not about the original data, but about the transformed dataset. We observe that VaDE increases the separation between different clusters, and we also see that it preserves some of the hierarchical information as well. Therefore, in the rest of the paper, we consider separation assumptions that are justified by the fact that we apply a learned embedding to the dataset before running any clustering algorithm. For more details, see Figure \ref{fig:separation} in Appendix~\ref{app:more-experiments}, which shows the separation on MNIST.


\subsection{Hierarchical Clustering Objectives} 
Index the dataset as $\X = \{x_1,\ldots,x_n\}$. Let $\leaves(\T[i \vee j]) \subseteq \X$ denote the set of leaves of the subtree in $\T$ rooted at $\mathsf{LCA}(x_i, x_j)$, the least common ancestor of $x_i$ and $x_j$. Given pairwise similarities $w_{ij}$ between $x_i$ and $x_j$, {\bf Dasgupta's cost}~\cite{dasgupta2016cost} minimizes 
$
    \cost(\T) = 
    \sum_{i,j \in [n]}
    w_{ij} \cdot |\leaves(\T[i \vee j])|.
$
{\bf Moseley-Wang's objective}~\cite{MoseleyWang} is a dual formulation, based on maximizing
\begin{equation}\label{eq:mw}
    \MW(\T) = 
    n \sum_{i,j \in [n]} w_{ij} - \cost(\T).
\end{equation}
{\bf Dendrogram purity}~\cite{heller2005bayesian} uses ground truth clusters $\C = \{C^*_1,\ldots, C^*_k\}$. Let $\calP^*$ be the set of pairs in the same cluster: $\calP^* = 
\{(x^i,x^j) \mid \C(x^i) = \C(x^j)\}$. Then, 
\begin{equation}\label{eq:dp}
  \DP(\T)\! = \frac{1}{|\calP^*|}
  \sum_{\ell = 1}^k 
  \sum_{x^i,x^j \in C^*_\ell}\!\!
  \pur(\leaves(\T[i \vee j]), C^*_\ell),
  \end{equation}
where
$
\pur(A,B) = |A \cap B|/|A|.
$

\smallpar{Ward's Method} 
Let $C_1,\ldots, C_k$ be a $k$-clustering. Let $\mu(C) = \frac{1}{|C|}\sum_{x\in C} x$ denote the mean of cluster~$C$. Ward's method~\cite{ward1963hierarchical, grosswendt2019analysis} merges two clusters to minimize the increase in sum of squared distances: \begin{align*}
\min_{i,j} \sum_{x \in C_i \cup C_j} \|x - \mu(C_i \cup C_j)\|_2^2 - \sum_{x \in C_i} \|x - \mu(C_i)\|_2^2 - \sum_{x \in C_j} \|x - \mu(C_j)\|_2^2.\end{align*}

\section{Embedding with VaDE}
\label{sec:approach}

VaDE~\cite{jiang2017variational} considers the following generative model specified by $p(x,z,c) = p(x\mid z)p(z\mid c)p(c)$. 
We use $\mathsf{Cat}(.)$ to denote a categorical distribution, $\mathcal{N}(.)$ is a normal distribution, and  $k$ is the predefined number of clusters. In the following, $\mu_c$ and $\sigma^2_c$ are parameters for the latent distribution, and $\hat \mu_z$ and $\hat \sigma^2_z$ are outputs of $f(z ;\theta)$, parametrized by a neural network.
And the generative process for the observed data is
\begin{equation}
    c \sim \mathsf{Cat}(1 / k),\  z \sim \mathcal{N}(\mu_c, \sigma^2_c \cdot I),\  x \sim \mathcal{N}(\hat \mu_z, \hat \sigma^2_z \cdot I).
\end{equation}
 The VaDE maximizes the likelihood of the data, which is bounded by the evidence lower bound, 
\begin{equation}\label{eq:elbo}
\begin{split}
L_{\text{\tiny ELBO}}
&= E_{q(z, c |x)}[\log p(x \!\mid\! c)] 
\\
&\hspace{5em}- D_{\text{KL}}(q(z,c\! \mid\! x)\ \|\ p(z, c)),
\end{split}
\end{equation}
where $q(z,c \mid x)$ is a variational posterior to approximate the true posterior $p(z,c \mid x)$. We use a neural network $g(x;\phi)$ to parametrize the model $q(z \mid x)$. We refer to the original paper for the derivation of Eq. (\ref{eq:elbo}) and for the connection of $L_{\text{ELBO}}$ to the gradient formulation of the objective~\cite{jiang2017variational}.

\subsection{Improving separation by rescaling means}
The VaDE landscape is a mixture of multiple Gaussian distributions.
As we will see in Section 4, the separation of Gaussian distributions is crucial for Ward's method to recover the underlying clusters. Since Ward's method is a local heuristic, data points that lie in the middle area of two Gaussians are likely to form a larger cluster before merging into one of the true clusters. 
This ``problematic'' cluster will further degrade the purity of the dendrogram. However, the objective that VaDE optimizes does not encourage a larger separation. As a result, we apply a transformation to the VaDE latent space to enlarge the separation between each Gaussians.  Let $x'_i$ be the embedded value of the data point $x_i$, and define 
$$
\mathcal{C}(x'_i) = \mathrm{argmax}_j[p(x'_i \mid \mu_{c = j},\ \sigma^2_{c = j}\cdot I)]
$$ 
to be the cluster label assigned by the learned GMM model. Then, we compute the transformation
\begin{equation*}
    x''_i = x'_i + s \cdot \mu_{i}
    \text{ with } \mu_i=\mu_{c=\mathcal{C}(x_i)},
\end{equation*}
where $s$ is a positive rescaling factor. The transformation has the following properties:
(i) for points assigned to the same Gaussian by the VaDE method, the transformation preserves their pairwise distances;
(ii) if $x_i$'s
assigned to the same 
$\mu$ by $\mathcal C$
are i.i.d.~random variables with expectation $\mu$ (which coincides with the intuition behind VaDE), then the transformation preserves the ratio between the distances of the expected point locations associated with different mean values. 
%
%

Appendix~\ref{app:more-experiments} illustrates how to choose $s$ in detail. To improve both DP and MW, the rule of thumb is to choose a scaling factor~$s$ large enough. We set $s = 3$ for all datasets.  In summary, we propose the following hierarchical clustering procedure:


\begin{enumerate}
    \item Train VaDE on the unlabeled dataset $\{x_1,\ldots, x_n\}$.
    \item Embed $x_i$ using the value mapping $x_i' = \hat \mu_{\hat z}$ where $(\hat \mu_{\hat z}, \log \hat \sigma_{\hat z}^2) = f(\hat z; \ \theta)$ and $\hat z = g(x_i\ ;\ \phi)$.
    \item Choose a rescaling factor $s$ and apply the rescaling transformation on each $x'_i$ to get $x''_i$.
    \item Run Ward's method on $\{x''_i\}_{i=1}^n$ to produce a hierarchical clustering.
\end{enumerate}


\begin{figure*}[t!]
    \centering
    \begin{subfigure}[t]{0.22\textwidth}
        \centering
        \includegraphics[height=0.95in]{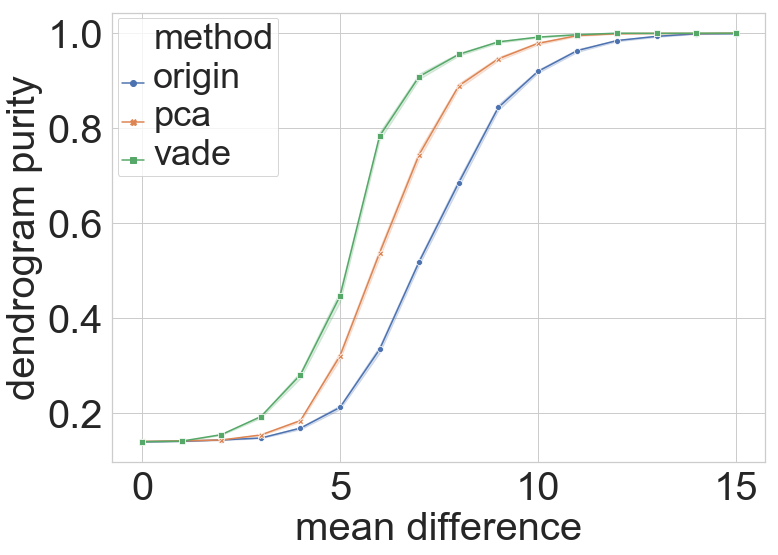}
        \caption{Dendrogram Purity}
        \label{fig:shift-orig}
    \end{subfigure}%
    ~ 
    \begin{subfigure}[t]{0.22\textwidth}
        \centering
        \includegraphics[height=0.95in]{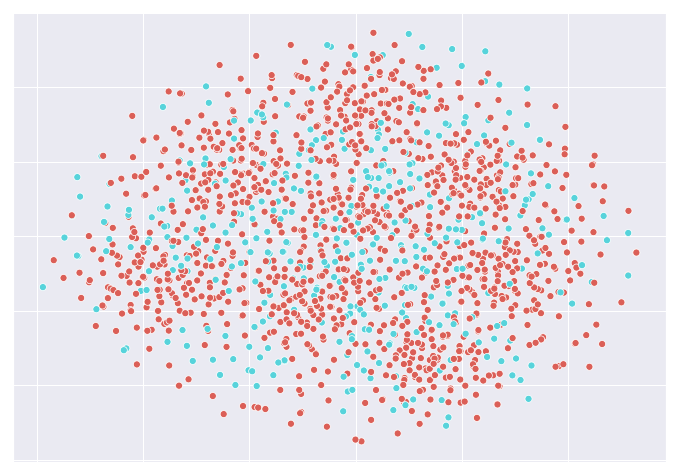}
        \caption{Original + Ward}
        \label{fig:spherical-original}
    \end{subfigure}
    ~
    \begin{subfigure}[t]{0.22\textwidth}
        \centering
        \includegraphics[height=0.95in]{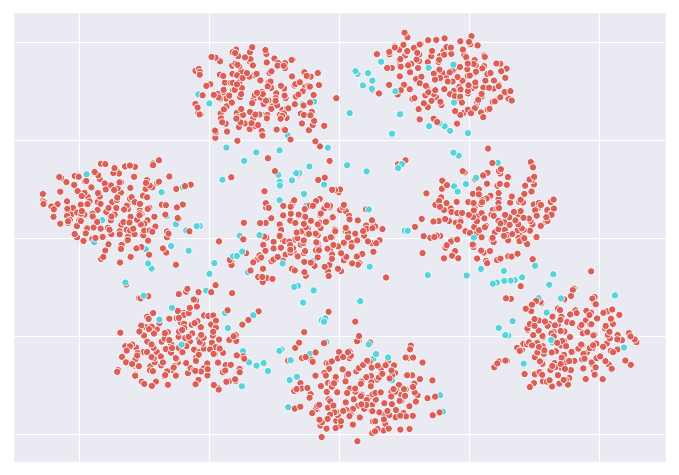}
        \caption{PCA + Ward}
        \label{fig:spherical-pca}
    \end{subfigure}
    ~
    \begin{subfigure}[t]{0.22\textwidth}
        \centering
        \includegraphics[height=0.95in]{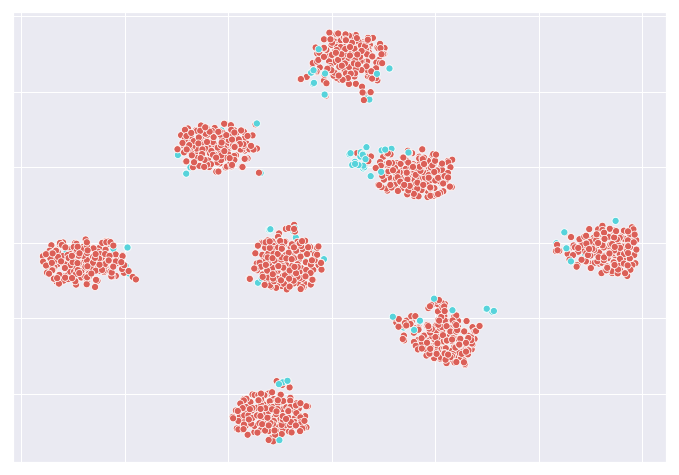}
        \caption{VaDE + Ward}
        \label{fig:spherical-vade}
    \end{subfigure}
    \caption{(a) DP of Ward's method on original, PCA, and VaDE space, varying mean separation. (b)(c)(d) show 8 clusters recovered by running Ward's method on original, PCA and VaDE latent space (Ward's method terminates when there are 8 clusters). Data drawn from 8 spherical Gaussians in $\R^{100}$ with pairwise mean separation 8. Point clouds generated by tSNE; Red colors indicate correct cluster labels assigned by Ward's method, and blue indicates wrong cluster.}
\label{fig:ward_experiment}
\end{figure*}

\section{Theoretical Analysis}
\label{sec:theory}






We provide theoretical guarantees for Ward's method on separated data, providing justification for using VaDE (proofs in Appendix~\ref{theorem_proof}). Here, we focus on the mixture model of \emph{spherical Gaussians}.  We are given a sample of size $n$, say $X_1, \ldots, X_n$,  where each point $X_i\in \mathbb R^d$ is independently drawn  from one of $k$ Gaussian components according to mixing weights $w_1,\ldots, w_k$, with the $j$-th component having mean $\mu_j\in \mathbb R^d$ and 
covariance matrix $\Sigma_j\in \mathbb R^{d\times d}$.  Specifically, for any component index $j$, 
we assume that the covariance $\Sigma_j$ satisfies $\Sigma_j = \sigma_j^2 \cdot I_d$, 
where $\sigma_j>0$ and $I_d\in \mathbb R^{d\times d}$ is the identity matrix. 
Under this assumption, the model is fully determined by the 
set of parameters $\{(w_i,\mu_i,\sigma_i): \forall i\in [k]\}$. 
The goal is to cluster the sample points by their associated components. 

The theorem below presents sufficient conditions for Ward's method to recover the 
underlying $k$-component clustering when points are drawn from a $k$-mixture of spherical Gaussians. To recover a $k$-clustering, we terminate when there are $k$ clusters.

\begin{Theorem}~\label{WardMethod}
Let $c, c_0>0$ be absolute constants.
Suppose we have $n$ samples from a $k$-mixture of spherical Gaussians
with parameters $\{(w_i,\mu_i,\sigma_i): w_i>0, \forall i\in [k]\}$ satisfying
\[
\forall i\not= j\in [k],
\norm{\mu_i-\mu_j}_2\ge 
c\sqrt{\nu}(\sigma_i+\sigma_j)(\sqrt d + \sqrt{\log n}),
\]
where $\nu:=\max_{\ell \not= t}w_\ell/w_t$, 
and suppose $n\ge \frac{c_0}{w_{\text{\tiny min}}}\log k $, where 
$w_{\text{\tiny min}}\!:=\!\min_t w_t$. Ward's method will recover the underlying $k$-clustering with probability at least $1-2/k$.
\end{Theorem}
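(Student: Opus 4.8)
The plan is to condition on a ``good event'' $\mathcal{G}$ on which the random sample behaves like a deterministic, well-separated instance, and then run an inductive analysis of Ward's merge sequence, extending the arguments of Grosswendt and R\"oglin \cite{grosswendt2019analysis, grosswendt_2020} to unequal variances and unequal mixing weights. Throughout, write $c(i)\in[k]$ for the component that generated $X_i$, let $T_j=\{i: c(i)=j\}$ be the $j$-th ground-truth cluster, and $n_j=|T_j|$. The final clustering equals $\{T_1,\dots,T_k\}$ iff, on the way down to $k$ clusters, Ward never merges two clusters that lie in different $T_j$'s; that is the event we will force.

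\textbf{Step 1: the good event.} Two things can fail. First, a point can land far from its mean: since $\norm{X_i-\mu_{c(i)}}_2^2/\sigma_{c(i)}^2\sim\chi^2_d$, standard sub-exponential tail bounds give $\Pr[\norm{X_i-\mu_{c(i)}}_2>\sigma_{c(i)}(\sqrt d+C\sqrt{\log n})]\le 1/(nk)$ for a suitable absolute constant $C$, so a union bound over the $n$ points costs probability at most $1/k$. Second, a light component can be mis-sampled: by a two-sided Chernoff bound $\Pr[n_j\notin[\tfrac12 nw_j,2nw_j]]\le e^{-c'nw_j}\le e^{-c'nw_{\text{\tiny min}}}\le k^{-c'c_0}$, and for $c_0$ large enough a union bound over the $k$ components again costs at most $1/k$. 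On the complementary event $\mathcal{G}$ (probability $\ge 1-2/k$) we record: (i) each $T_j$ has diameter at most $R_j:=2\sigma_j(\sqrt d+C\sqrt{\log n})$; (ii) for $i\ne j$ and any $x\in T_i$, $y\in T_j$ (or empirical means of subsets thereof, which lie in the convex hulls), $\norm{x-y}_2\ge\norm{\mu_i-\mu_j}_2-\tfrac12(R_i+R_j)\ge\tfrac{c}{4}\sqrt{\nu}\,(R_i+R_j)$ once $c$ is a large enough absolute constant (using the separation hypothesis and $\nu\ge 1$); and (iii) $n_j\in[\tfrac12 nw_j,2nw_j]$ for all $j$, hence $\max_j n_j/\min_j n_j\le 4\nu$.

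\textbf{Step 2: Ward stays inside components.} We claim that on $\mathcal{G}$ every cluster Ward ever forms is contained in a single $T_j$; this forces the terminal $k$-clustering to be exactly $\{T_1,\dots,T_k\}$, since $k$ blocks, each inside a distinct component and together covering all points, must be the components. We induct on the merge steps, the hypothesis being ``the current partition refines $\{T_1,\dots,T_k\}$.'' Ward merges the pair $(C,C')$ minimising $\Phi(C,C'):=\frac{|C||C'|}{|C|+|C'|}\norm{\mu(C)-\mu(C')}_2^2$. If more than $k$ blocks remain, by pigeonhole some $T_\ell$ contains at least two current clusters, so there is a within-$T_\ell$ pair with $\Phi\le\tfrac12 n_\ell R_\ell^2$ (using $\frac{ab}{a+b}\le\min(a,b)\le\tfrac12(|C|+|C'|)\le\tfrac12 n_\ell$ and (i)); while every cross-component pair $C\subseteq T_i$, $C'\subseteq T_j$ has $\Phi\ge\frac{|C||C'|}{|C|+|C'|}\big(\tfrac{c}{4}\sqrt\nu(R_i+R_j)\big)^2$ by (ii). The delicate point is that the latter weight can be as small as $\tfrac12$ (two singletons), so making the within-merge strictly cheaper requires the factor $\nu$ in the separation to absorb the size ratio $n_\ell/\min(|C|,|C'|)$ and the variance ratio $R_\ell/(R_i+R_j)$ --- this is precisely where $\norm{\mu_i-\mu_j}_2\ge c\sqrt\nu(\sigma_i+\sigma_j)(\sqrt d+\sqrt{\log n})$ and fact (iii) are consumed.

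\textbf{Main obstacle.} The crude bound $\tfrac12 n_\ell R_\ell^2$ is, by itself, not enough: as Ward grows a within-component cluster to size $t$, the cost of merging it further scales like $t\cdot R_\ell^2$, whereas a fresh cross-component singleton merge has the fixed cost $\tfrac12\norm{x-y}_2^2$ with $x,y$ in different components; once $t$ is large the ordering could flip. The real content of the argument --- and the step I expect to be hardest --- is to show that Ward exhausts all the cheap, small within-component merges, simultaneously across all components, before any within-component cluster grows large enough to make a cross-component merge competitive. This is exactly the phenomenon analysed by Grosswendt and R\"oglin for equal-weight, equal-variance clusters; adapting their potential-function / amortised accounting to our setting means re-deriving their key inequalities with the variance-dependent radii $R_j$ and with cluster sizes that can differ by the factor $4\nu$, which is the source of the $\sqrt\nu$ blow-up in the separation requirement. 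Once that estimate is in hand, the induction of Step 2 closes: on $\mathcal{G}$ Ward terminates at exactly $\{T_1,\dots,T_k\}$, and $\Pr[\mathcal{G}]\ge 1-2/k$ gives the theorem.
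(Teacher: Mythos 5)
Your Step 1 coincides with the paper's proof: the same two concentration bounds (a spherical-Gaussian tail giving per-cluster radii $S_i=\sigma_i(\sqrt d+O(\sqrt{\log n}))$, and a Chernoff bound putting each empirical count $n_j$ in $[\tfrac12 nw_j,\,2nw_j]$), combined by union bounds into a good event of probability at least $1-2/k$. The divergence is in Step 2. The paper does not re-run the Ward merge induction at all: it invokes, as a black box, the deterministic recovery theorem of Grosswendt and R\"oglin (Lemma~\ref{opt_ward} in the appendix), which states that if $\norm{\mu_i^*-\mu_j^*}_2>(2+2\sqrt{2\nu})\max_{x\in C_i^*}\norm{x-\mu_i^*}_2$ for all $i\ne j$, with $\nu$ the maximum ratio of cluster sizes, then Ward's method recovers the underlying clustering. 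That statement already accommodates unequal cluster sizes (through $\nu$) and per-cluster radii (hence unequal variances), so on the good event one only has to verify its hypothesis, which the assumed separation with the $\sqrt\nu$ factor and the factor-$4$ control on the empirical size ratio do directly.

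Your proposal, by contrast, sets out to re-prove that deterministic statement and stops exactly at its hardest point. You correctly observe that the crude comparison (cheapest within-component merge costs at most $\tfrac12 n_\ell R_\ell^2$, every cross-component merge costs at least $\tfrac12(\text{separation})^2$) does not close the induction, because a within-component cluster that has grown to size $t$ makes further internal merges cost on the order of $tR_\ell^2$, which can overtake a fresh cross-component singleton merge. You then defer the resolution (``once that estimate is in hand, the induction closes'') to an unperformed adaptation of the Grosswendt--R\"oglin amortised accounting. As written this is a genuine gap: the claim that Ward exhausts all cheap within-component merges, simultaneously across components, before any cross-component merge becomes competitive \emph{is} the entire content of the recovery statement, and nothing in your write-up establishes it. The remedy is either to actually carry out that accounting for unequal radii and sizes, or to do what the paper does and cite the existing lemma in its general (unequal-size, per-cluster-radius) form, whose hypotheses your good event already verifies; your stated worry that the published result covers only the equal-weight, equal-variance case is not borne out by the form in which the paper uses it.
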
 
As a recap, Ward's method starts with singleton clusters, and repeatedly 
selects a pair of clusters and merges them into a new cluster,
where the selection is based on the optimal value of the error sum of squares. 
In particular, \emph{recovering the underlying $k$-component clustering} means that 
every cluster contains only points from the same Gaussian component
when the algorithm produces a $k$-clustering. 

\subsection{Intuition and Optimality} 
On a high level,
the mean separation conditions in  Theorem~\ref{WardMethod} guarantee that
with high probability, the radiuses of the clusters will be larger than the inter-cluster distances, and the sample lower bound ensures 
that we observe at least constant many sample points from every Gaussian, with high probability.
In Appendix~\ref{theorem_optimality}, 
we illustrate the hardness of improving the theorem without pre-processing the data, e.g., 
applying an embedding method to the sample points, such as VaDE or PCA. 
In particular, we show that
(i) without further complicating the algorithm, 
the separation conditions in Theorem~\ref{WardMethod} 
have the right dependence on $d$ and $n$; (ii)
for exact recovery, the sample-size lower bound in Theorem~\ref{WardMethod} is also tight up to logarithmic factors of $k$.

These results and arguments indeed justify our motivation for combining VaDE with Ward's method. To further demonstrate our approach's power, we experimentally evaluate the clustering performance of three methods on high-dimensional equally separated Gaussian mixtures -- the vanilla Ward's method, 
the PCA-Ward combination, and the VaDE-Ward combination. As shown in Figure~\ref{fig:shift-orig}, 
in terms of the dendrogram purity,
Ward's method with VaDE pre-processing tops
the other two, for nearly every mean-separation level. In terms of the final clustering result in the (projected) space, Figures~\ref{fig:spherical-original} to~\ref{fig:spherical-vade} clearly shows the advantage of the VaDE-Ward method as it significantly enlarges the separation among different clusters while making much fewer mistakes in the clustering accuracy.


Next, we show that given the exact recovery in Theorem~\ref{WardMethod} and mild
cluster separation conditions, 
Ward's method recovers the exact underlying hierarchy. 

For any index set $I\subseteq [k]$ and corresponding Gaussian components $\mathcal G_i, i\in I$, 
denote by $w_I$ the total weight $\sum_{i\in I} w_i$, and by $C_I$ the union of sample points from these components. We say that a collection
$\mathcal H$ of nonempty subsets of $[k]$ form a \emph{hierarchy of order $\tau$ over $[k]$} if there exists a index set sequence
$T_\ell$ for $\ell=1,\ldots,\tau$ such that 
i) for every $\ell$, the set $\mathcal H_\ell$ partitions $[k]$;
ii) $\mathcal H_k$ is the union of 
$\mathcal H_\ell:=\{I_{\ell t}: t\in T_\ell\}$;
iii) for every $\ell< \tau$ and $t\in T_\ell$, we have $I_{\ell t}\subseteq I_{(\ell+1)t'}$ for some $t'\in T_{\ell+1}$. 

For any sample size $n$ and Gaussian component $\mathcal G_i$, denote by $S_i:=\sigma_i(\sqrt d+ 2\sqrt{\log n})$, 
which upper bounds the radius of the corresponding sample cluster, with high probability. 
The triangle inequality then implies that for any $i\not= j\in [k]$, the distance between a point in the sample cluster of $\mathcal G_i$ and 
that of $\mathcal G_j$ is \emph{at most} $D_{ij}^+:=\norm{\mu_i-\mu_j}_2+S_i+S_j$, and \emph{at least} $D_{ij}^-:=\norm{\mu_i-\mu_j}_2-S_i-S_j$. 
Naturally, we also define $D_{I,J}^+:=\max_{i\in I, j\in J} D_{ij}^+$ and $D_{I,J}^-:=\min_{i\in I, j\in J} D_{ij}^-$ for any $I, J\subseteq [k]$,
serving as upper and lower bounds on the cluster-level inter-point distances.
These compact notations give our theoretical claim a simple form.

\begin{Theorem}\label{thm:ward-hierarchical}
There exists an absolute constant $c_1<8$ such that the following holds. 
Suppose we have $n$ samples from a $k$-mixture of spherical Gaussians that satisfy the conditions and sample size bound in Theorem~\ref{WardMethod}, 
and suppose there is an 
underlying hierarchy of the Gaussian components $\mathcal H$ satisfying
\[
\forall \ell\in [s], 
\ \ 
I \not= J\in \mathcal H_\ell, 
\ \
D_{I,J}^-
\ge 
c_1 \sqrt{\nu_{\ell}}\max\{D_{I,I}^+ , D_{J,J}^+\},
\]
where $\nu_\ell:=\max_{I \not= J\in \mathcal H_\ell}w_I/w_J$. Then, Ward's method recovers 
the underlying hierarchy $\mathcal H$ with probability at least $1-2/k$.


\end{Theorem}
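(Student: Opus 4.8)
The plan is to bootstrap from Theorem~\ref{WardMethod}, which already gives us that with probability at least $1-2/k$ Ward's method correctly recovers the flat $k$-clustering, i.e., the first $n-k$ merges never combine points from two different Gaussian components. We condition on this good event for the remainder of the argument, together with the accompanying high-probability event that every sample cluster $C_i$ has radius at most $S_i = \sigma_i(\sqrt d + 2\sqrt{\log n})$ (this is the same Gaussian concentration bound used to prove Theorem~\ref{WardMethod}, so no new probabilistic cost is incurred and the failure probability stays $2/k$). On this event, after $n-k$ steps Ward's method holds exactly the clusters $C_1,\ldots,C_k$, and all subsequent merges operate on these $k$ ``super-points.'' What remains is to show that, starting from $C_1,\ldots,C_k$, the greedy Ward merges respect the hierarchy $\mathcal H$: at level $\ell$ the algorithm first assembles all blocks of $\mathcal H_\ell$ before creating any block that straddles two distinct $\mathcal H_\ell$-blocks.

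The core of the argument is an induction on the hierarchy levels $\ell = 1, 2, \ldots, \tau$, mirroring the structure of Theorem~\ref{WardMethod}'s proof but now at the coarser granularity of $\mathcal H$-blocks rather than individual Gaussians. Suppose inductively that Ward's method has correctly built every block $C_{I}$ for $I \in \mathcal H_\ell$ (the base case $\ell$ being the flat clustering from Theorem~\ref{WardMethod}, viewing singletons $\{i\}$ as $\mathcal H_k$). I want to show the next merges only combine blocks that sit inside a common $\mathcal H_{\ell+1}$-block. The key quantitative tool is the Ward merge cost: merging clusters $A$ and $B$ incurs cost $\frac{|A||B|}{|A|+|B|}\norm{\mu(A)-\mu(B)}_2^2$. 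I would bound this from above when $A,B$ lie in the same $\mathcal H_{\ell+1}$-block $K$ — using $\norm{\mu(A)-\mu(B)}_2 \le D^+_{K,K}$ — and from below when $A,B$ straddle two distinct blocks $I \ne J \in \mathcal H_{\ell+1}$ — using $\norm{\mu(A)-\mu(B)}_2 \ge D^-_{I,J}$. The separation hypothesis $D^-_{I,J} \ge c_1 \sqrt{\nu_\ell}\max\{D^+_{I,I}, D^+_{J,J}\}$ is precisely what makes the within-block cost strictly smaller than the cross-block cost, so the greedy rule always prefers the within-block merge. This step reuses essentially the same inequality manipulation as in Theorem~\ref{WardMethod} (where the analogous statement is proved for individual Gaussians with the margin condition), which is why the constant $c_1 < 8$ can be quoted by analogy; I would cite the extension of~\cite{grosswendt2019analysis, grosswendt_2020} invoked there.

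The weight-ratio factors $\nu_\ell$ are the technical nuisance that forces care: the Ward cost carries the harmonic-mean prefactor $\frac{|A||B|}{|A|+|B|}$, and when blocks have very unequal sizes a large cross-block distance can still yield a comparatively small merge cost (or vice versa). Controlling this is exactly why $\sqrt{\nu_\ell}$ appears in the hypothesis — $|C_I| \approx n w_I$ up to lower-order fluctuations (Chernoff, using the sample lower bound $n \ge \frac{c_0}{w_{\text{min}}}\log k$), so the prefactor ratio between a within-$K$ merge and a cross-block merge is controlled by $\nu_\ell$. The main obstacle I anticipate is handling partially-assembled blocks cleanly: at an intermediate moment within level $\ell+1$, Ward may be holding a not-yet-complete union of some $\mathcal H_\ell$-blocks, and I must verify the cost comparison still favors completing that union over reaching across to another $\mathcal H_{\ell+1}$-block. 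This requires the bound $D^-_{I,J} \ge c_1\sqrt{\nu_\ell}\,D^+_{K,K}$ to hold uniformly over all sub-unions, which it does because $D^+_{K,K}$ and $D^-_{I,J}$ are defined as max/min over the constituent Gaussians and hence already dominate every sub-union's diameter and under-estimate every cross pair. Assembling these observations — good event from Theorem~\ref{WardMethod}, radius concentration, cluster-size concentration, and the level-by-level greedy comparison — completes the proof with the stated failure probability $2/k$.
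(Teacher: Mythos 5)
Your proposal takes essentially the same route as the paper's proof: condition on the good events established for Theorem~\ref{WardMethod} (correct flat recovery, the radius bounds $S_i$, and cluster-size concentration), then argue that the separation hypothesis $D^-_{I,J}\ge c_1\sqrt{\nu_\ell}\max\{D^+_{I,I},D^+_{J,J}\}$ places each level of $\mathcal H$ in the regime of the separated-clusters guarantee of Lemma~\ref{opt_ward}, so Ward's greedy merges respect every level and hence the whole hierarchy. The paper simply invokes that lemma level by level, whereas you unpack its internal merge-cost comparison and the partially-assembled-block issue explicitly; this is a difference of exposition rather than of approach.
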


\begin{figure*}[t]
\begin{center}
   \caption{In the original space, Ward's method makes several mistakes: three orchids merge into the {\em aquatic animals} cluster before the {\em flowers} cluster. Using the VaDE embedding, Ward's method produces a near optimal dendrogram on this subset of data for both lower and higher level structures.}
 \includegraphics[width=.92\textwidth]{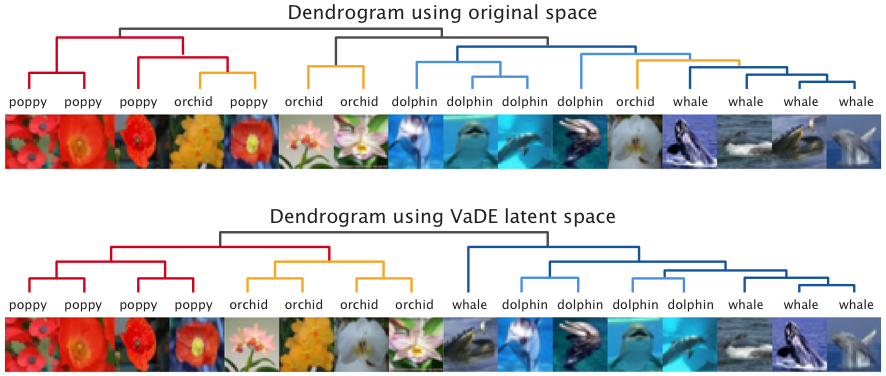}
 \label{fig:cifar100_dendro}
\end{center}  
\end{figure*}
\begin{table*}[ht]\scriptsize
\caption{Dendrogram Purity and Moseley-Wang's objective on BTGM synthetic dataset
with $\alpha=2$.
We evaluate Ward's method using different embeddings, while varying parameters: $k$ denotes the number of pure clusters, $h$ denotes both the dimension of the PCA and VaDE latent spaces and the tree height.}
    \centering
\subfloat[Dendrogram Purity]{
\begin{tabular}{rccc}
\toprule
& $k$ = 8\ ($h$ = 3) & $k$ = 16\ ($h$ = 4)& $k$ = 32\ ($h$ = 5)\\
\midrule
Original & 0.798 $\pm$ .020& 0.789 $\pm$ .017 & 0.775 $\pm$ .016  \\
PCA & 0.518 $\pm$ .002 &0.404 $\pm$ .003 & 0.299 $\pm$ .003 \\
VAE & 0.619 $\pm$ .030 & 0.755 $\pm$ .019  & 0.667 $\pm$ .012 \\
VaDE & \bf{0.943 $\pm$ .011} &\bf{0.945 $\pm$ .001}&\bf{0.869 $\pm$ .013}\\
\bottomrule
\end{tabular}
\hspace{2ex}
}
\subfloat[Moseley-Wang (ratio vs. {\sc opt})]{
\begin{tabular}{ccc}
\toprule
$k = 8\ (h = 3)$ & $k = 16\ (h = 4)$ & $k = 32\ (h = 5)$\\
\midrule 
0.960 $\pm$ 0.004 & 0.980 $\pm$ 0.002 & 0.989 $\pm$ 0.001  \\
 0.909 $\pm$ 0.001 &0.913 $\pm$ 0.004 & 0.936 $\pm$ 0.001 \\
 0.791 $\pm$ 0.028 & 0.898 $\pm$ 0.017 & 0.969 $\pm$ 0.011\\
\bf{0.990 $\pm$ 0.020 }& \bf{0.994 $\pm$ 0.001} & \bf{0.991 $\pm$ 0.001}\\
\bottomrule
\end{tabular}
}
\label{synthetic_table}
\end{table*}

\begin{table*}[th]
\caption{DP on four real datasets for different embedding methods.}
    \centering
\begin{tabular}{r c c c c}
\toprule
& \textsc{Reuters}& \textsc{MNIST} & \textsc{CIFAR-25} & \textsc{20 Newsgroups}\\
\midrule 
Original  & 0.535 & 0.478 & 0.110 & 0.225\\
PCA  & 0.587 & 0.472 & 0.108 & 0.203\\
VAE & 0.468  & 0.495 & 0.082 & 0.196\\
VaDE  &  0.650  & 0.803 & 0.120  & 0.248\\
VaDE + Trans.  & {\bf 0.672} & {\bf 0.886} & \bf{0.128} & {\bf 0.251} \\
\bottomrule
\end{tabular}
\label{comparison_table_DP}
\end{table*}

\begin{table*}[th]
\caption{MW (ratio vs. {\sc opt}) on four real datasets for different embedding methods.}
    \centering
\begin{tabular}{r c c c c}
\toprule
& \textsc{Reuters}& \textsc{MNIST} & \textsc{CIFAR-25} & \textsc{20 Newsgroups}\\
\midrule 
Original &  0.654  & 0.663  & 0.423 & 0.557\\
PCA &  0.679  & 0.650   & 0.435 & 0.575\\
VAE &  0.613  & 0.630  & 0.393 & 0.548\\
VaDE &  0.745  & 0.915  & 0.455 & 0.595\\
VaDE + Trans.  & {\bf 0.768 } & {\bf 0.955 }  & {\bf 0.472} & {\bf 0.605}\\
\bottomrule
\end{tabular}
\label{comparison_table_MW}
\end{table*}


The above theorem resembles Theorem~\ref{WardMethod},
with separation conditions comparing the cluster radiuses with the inter-cluster distances at different hierarchy levels.
Informally, the theorem states that if the clusters in each level of the hierarchy are well-separated, Ward's method will be able to recover the correct clustering of that level, and
hence also the entire hierarchy. 

Recall from Section~\ref{sec:prelims} that a BTGM, Binary Tree Gaussian Mixture, is a mixture of Gaussians that encode the hierarchy. For the BTGM in Eq.~(\ref{eq:btgm}), the shared variance implies that
$S:=S_i=\sqrt d + 2\sqrt{\log n}$ is a constant for any $i\in [k]$, yielding 
$D_{I,I}^+=\max_{i,j\in I} \norm{\mu_i - \mu_j}_2+2S$ for any $I\subseteq [k]$.
Then by symmetry, we see that $\nu_\ell = 1$,\ $\max_{i,j\in I} \norm{\mu_i - \mu_j}_2+2S$, and 
$D_{I,J}^- + 2S 
\ge 
\alpha(D_{I,I}^+ - 2S)\ge \alpha m$ 
for any level~$\ell$ and $I\not=J\in \mathcal H_\ell$. Consolidating Theorem~\ref{WardMethod} and~\ref{thm:ward-hierarchical} with these simplifications yields
the following result. 

\begin{Corollary}\label{cor:btgm}
Consider the BTGM in Equation~\ref{eq:btgm} and let $k:=2^h$ be the component number. Let $c, c_0$, and $c_1$ be the absolute constants in Theorem~\ref{WardMethod} and~\ref{thm:ward-hierarchical}, and define a constant $c_2:=2(2c_1+1)/(\alpha-c_1)$.
If the sample size satisfies $n\ge c_0\cdot k\log k$ and the model is well-separated with $\alpha>c_1$ and
$m\ge 2\max\{c,c_2\}\cdot (\sqrt{d} +\sqrt{\log n})$, then Ward's method recovers the underlying hierarchy $\mathcal H$ with probability at least $1-2/k$. 
\end{Corollary}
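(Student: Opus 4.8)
The plan is to derive Corollary~\ref{cor:btgm} by verifying that the BTGM, under the stated hypotheses on $m$, $\alpha$, and $n$, satisfies both the separation condition of Theorem~\ref{WardMethod} and the hierarchical separation condition of Theorem~\ref{thm:ward-hierarchical}, so that both theorems apply and combine. First I would record the structural facts about the BTGM that follow directly from Eq.~\eqref{eq:btgm}: all components share variance, so $S_i = S = \sqrt d + 2\sqrt{\log n}$ for every $i$, all mixing weights are equal so $\nu = 1$ and $\nu_\ell = 1$ at every level, and the key geometric property $\norm{\mu_i - \mu_j}_2 \ge m\cdot\alpha^{\mathsf{ht}(\mathsf{LCA}(\mu_i,\mu_j))}$ from the model definition. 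I would also need a matching upper bound on $\norm{\mu_i-\mu_j}_2$ in terms of $\alpha^{\mathsf{ht}(\mathsf{LCA})}$ — summing the geometric series $\sum_{j \le \mathsf{ht}(\mathsf{LCA})} m^2\alpha^{2(h-j)}$ coordinate-wise gives $\norm{\mu_i-\mu_j}_2^2 = O(m^2\alpha^{2\,\mathsf{ht}(\mathsf{LCA})})$ with a constant depending only on $\alpha$, i.e. $\norm{\mu_i-\mu_j}_2 \le c'\, m\,\alpha^{\mathsf{ht}(\mathsf{LCA})}$ for an absolute $c' = \alpha/\sqrt{\alpha^2-1}$ (bounded since $\alpha>1$).

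Next I would check Theorem~\ref{WardMethod}. Here $\sigma_i + \sigma_j = 2$, $\nu = 1$, and any two distinct leaves have $\mathsf{ht}(\mathsf{LCA}) \ge 1$, so $\norm{\mu_i-\mu_j}_2 \ge m\alpha \ge m$. The hypothesis $m \ge 2c(\sqrt d + \sqrt{\log n}) \ge 2c(\sqrt d + \sqrt{\log n})$ (since $\max\{c,c_2\}\ge c$) then gives $\norm{\mu_i - \mu_j}_2 \ge c\cdot 2(\sqrt d+\sqrt{\log n}) = c\sqrt\nu(\sigma_i+\sigma_j)(\sqrt d + \sqrt{\log n})$, as required, and $n \ge c_0 k\log k \ge \frac{c_0}{w_{\min}}\log k$ since $w_{\min}=1/k$. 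So Theorem~\ref{WardMethod} applies and Ward's method recovers the flat $k$-clustering with probability $\ge 1 - 2/k$. Then I would turn to the hierarchical condition: fix a level $\ell$ and $I \ne J \in \mathcal H_\ell$. Using the definitions, $D_{I,J}^- = \min_{i\in I,j\in J}\norm{\mu_i-\mu_j}_2 - 2S$, and $D_{I,I}^+ = \max_{i,j\in I}\norm{\mu_i-\mu_j}_2 + 2S$. For $i\in I, j\in J$ the LCA has height at least $\ell+1$ (one above the level of the subtrees $I$, $J$), while for $i,j$ both in $I$ the LCA has height at most $\ell$; combining the lower and upper mean-distance bounds, $\min_{i\in I,j\in J}\norm{\mu_i-\mu_j}_2 \ge m\alpha^{\ell+1} = \alpha\cdot m\alpha^\ell \ge \alpha \cdot \frac{1}{c'}\max_{i,j\in I}\norm{\mu_i-\mu_j}_2$ — roughly a factor-$\alpha$ gap between inter-cluster and intra-cluster mean distances, up to the constant $c'$. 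Absorbing the additive $2S$ terms (which are dominated since $m\alpha^\ell \ge m \ge 2c_2(\sqrt d + \sqrt{\log n}) \gtrsim S$) into the constant, one gets $D_{I,J}^- \ge c_1\max\{D_{I,I}^+, D_{J,J}^+\}$ provided $\alpha$ exceeds $c_1$ by a definite margin and $m$ is large enough — this is exactly where the definition $c_2 = 2(2c_1+1)/(\alpha - c_1)$ and the requirement $m \ge 2c_2(\sqrt d+\sqrt{\log n})$ come in, chosen so the additive slack and the $c'$ factor are accounted for. Since $\nu_\ell = 1$, the condition of Theorem~\ref{thm:ward-hierarchical} holds at every level, so it applies and yields recovery of $\mathcal H$.

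Finally I would assemble the two conclusions: Theorem~\ref{WardMethod} and Theorem~\ref{thm:ward-hierarchical} each hold on the same high-probability event (the event that all sample clusters have radius at most $S_i$ and at least one sample falls in each component), so there is no union-bound loss and the overall success probability is $1 - 2/k$, giving the corollary as stated.

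The main obstacle I anticipate is the bookkeeping of absolute constants in the last step — carefully tracking how the additive $2S$ terms and the upper-bound constant $c' = \alpha/\sqrt{\alpha^2-1}$ interact, and verifying that the stated choice $c_2 = 2(2c_1+1)/(\alpha-c_1)$ together with $m \ge 2\max\{c,c_2\}(\sqrt d+\sqrt{\log n})$ really forces $D_{I,J}^- \ge c_1\max\{D_{I,I}^+,D_{J,J}^+\}$ with no residual dependence on $d$, $n$, or $h$. A secondary subtlety is making sure the height-of-LCA bounds are indexed consistently with the hierarchy order $\tau$ in the definition of $\mathcal H$ (level $\ell$ of the BTGM corresponds to subtrees rooted at height $\ell$), so that the factor-$\alpha$ gap between levels is applied at the right place. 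Everything else — the geometric-series estimate for $\norm{\mu_i-\mu_j}_2$, the $w_i$ symmetry giving $\nu=\nu_\ell=1$, and the reduction $w_{\min}=1/k$ — is routine.
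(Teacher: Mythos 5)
Your proposal is correct and follows essentially the same route as the paper, which proves the corollary only via the short consolidating paragraph preceding it: shared variance gives a common radius bound $S$, symmetry gives $\nu=\nu_\ell=1$ and $w_{\min}=1/k$, the geometric decay of the coordinates gives a factor-$\alpha$ gap between inter- and intra-cluster mean distances at each level, and the choice of $c_2$ absorbs the additive $2S$ terms so that Theorems~\ref{WardMethod} and~\ref{thm:ward-hierarchical} both apply on the same high-probability event. Your version is in fact more careful than the paper's (the explicit geometric-series upper bound on $\norm{\mu_i-\mu_j}_2$ and the verification that $c_2=2(2c_1+1)/(\alpha-c_1)$ suffices are left implicit there), and the deferred constant-tracking does check out.
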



\section{Experiments}
\label{sec:experiments}

Our main experimental goal is to determine which embedding method produces the best results for dendrogram purity and Moseley-Wang's objective\footnote{The source code and pretrained models are available in \href{https://github.com/EHC-nips/Unsupervised_embedding_of_hierarchical_structure_in_euclidean_space}{this link}}. In this section, we focus on Ward's method for clustering after embedding (Appendix~\ref{app:more-experiments} has other algorithms). 


\smallpar{Set-up and methods} 
We compare VaDE against PCA (linear baseline) and a standard VAE (non-linear baseline).  We run Ward's method on the embedded or original data to get dendrograms. 
We test on four real datasets: Reuters and MNIST with flat clustering labels (i.e., 1-level hierarchy); 20 Newsgroups (3-level hierarchy) and CIFAR-25 (2-level hierarchy).
We only use 25 classes from CIFAR-100 that fall into one of the five superclasses ``aquatic animals'', ``flowers'', ``fruits and vegetables'', ``large natural outdoor scenes'' and ``vehicle1''. We call it CIFAR-25. For 20 Newsgroups and CIFAR-25, we use the full dataset, but for MNSIT and Reuters, we average results over 100 runs, each with 10,000 random samples. Embedding methods are trained on the whole dataset. Details in Appendix~\ref{app:more-experiments}. We set the margin $m$ of the BTGM  to be 8 while varying the depth, so the closest distance between the means of any two Gaussian distributions is 8.



\smallpar{Synthetic Data (BTGM)}
 We vary the depth and margin of BTGM to evaluation various hierarchy depths and cluster separations. After generating $k$ means for the BTGM, we shift all non-zero entries of the first $\frac{k}{2}$ means to an arbitrary location. We show that this simple non-linear transformation is challenging for linear embedding methods (i.e., PCA), even though it preserves hierarchical structure. To generate $k$ clusters with shifted means, we sample 2000 points from each Gaussian distribution of the BTGM of height $h$ such that $2^h = k$ for $h = 3,4,5$. Then, $h$ is the embedding dimension of PCA, VAE, and VaDE.
 
\smallpar{Dendrogram Purity (DP)} We compute the DP of various methods using Eq.~(\ref{eq:dp}). We construct ground truth clusters either from class labels of real datasets or from the leaf clusters of the BTGM model.
 
\smallpar{Moseley-Wang's Objective (MW)} 
To evaluate the MW objective in Eq.~(\ref{eq:mw}), we need pairwise similarities $w_{ij}$. We define the similarities based on the ground truth labels and hierarchy. Let $h$ denote the depth of the hierarchy (which ranges between 1 and 5 in our experiments). We use 
$w_{ij} = 2^{l-1} \cdot \mathbbm{1}_{\C^l(x^i) = \C^l(x^j)}$, $l \in [h]$, where $\C^l(x^i)$ denotes the label for point $x^i$ at the $l^{th}$ level of the hierarchical tree from top to bottom. We report the {\bf ratio} achieved by various methods to the optimal MW value {\sc opt} that we compute as follows. When there are hierarchical labels, we compare against the optimal tree with ground truth levels. Otherwise, for flat clusters, {\sc opt} is obtained by merging points in the same clusters, then building a binary tree on top.


\subsection{Results}

\smallpar{Results on Synthetic Data}
Table~\ref{synthetic_table} presents the results of the synthetic experiments.
Embedding with VaDE and then clustering with Ward's method leads to the best DP and MW results in all settings. For DP, we see that PCA performs the worst, while the original (non-embedded) data has the second-best results, and the difference between the results is larger than the empirical standard deviation. For MW, the results are more similar to each other, and perhaps surprisingly, the relative performance changes with the depth.  The latent distributions learned by VaDE are already well-separated, so the rescaling is not necessary in this case. 

\smallpar{Results on Real Data} 
Tables \ref{comparison_table_DP} and \ref{comparison_table_MW} report DP and MW objectives on real datasets.  We observe that for MNIST, using the VaDE embedding improves DP and MW significantly compared to other embeddings, and applying the transformation further leads to an increased value of both objectives. The results on the Reuters and 20 Newsgroups datasets show that VaDE plus the rescaling transformation is also suitable for text data, where it leads to the best performance. In the CIFAR-25 experiments, we see only a modest improvement with VaDE when evaluating with all 25 classes.  We also observe from the table that these two objectives are positively correlated for most of the cases. The only exception occurs when we apply PCA to CIFAR-25 and 20 Newsgroups. PCA embedding increases the MW objective but decreases the DP comparing to the original data. Note that we utilize the hierarchical labels while computing the MW objective for CIFAR-25 and 20 Newsgroups. One possible reason is that PCA can only separate super-clusters in the higher level, but it fails to separate clusters in the lower level of the hierarchical tree. Therefore it results in worse DP but better MW performance.

\subsection{Discussion}

In Theorems~\ref{WardMethod} and~\ref{thm:ward-hierarchical}, and Corollary~\ref{cor:btgm}, we studied the performance of Ward's method on the fundamental Gaussian mixture model. We showed that with high probability, Ward's method perfectly recovers the underlying hierarchy under mild and near-optimal separability assumptions in Euclidean distance. Since VaDE uses a GMM prior, this is a natural metric for the embedded space. As the rescaling increases the separation, we suppose that the embedded vectors behave as if they are approximately sampled from a distribution satisfying the conditions of the theorems. Consequently, our theoretical results justify the success of the embedding in practice, where we see that VaDE+Transformation performs the best on many datasets. We hypothesize that VaDE generates a latent space that encodes more of the high level structure for the CIFAR-25 images. This hypothesis can be partially verified by the qualitative evaluation in  Figure~\ref{fig:cifar100_dendro}. A trend consistent with the synthetic experiments is that PCA and VAE are not always helpful embeddings for hierarchical clustering with Ward's method. Overall, VaDE discovers an appropriate embedding into Euclidean space based on its GMM prior that improves the clustering results.

The synthetic results in Table~\ref{synthetic_table} exhibit the success of the VaDE+Transformation method to recover the underlying clusters and the planted hierarchy. This trend holds while varying the number of clusters and depth of the hierarchy. 
Qualitatively, we find that VaDE can ``denoise'' the sampled points when the margin does not suffice to guarantee non-overlapping clusters. This enables Ward's method to achieve a higher value of DP and MW than running the algorithm on the original data. 
In many cases, embedding with PCA or VAE is actually detrimental to the two objectives compared to the original data. We conjecture that the difference in performance arises because VaDE fully utilizes the $h$-dimensional latent space, but PCA fails to resolve the non-linear shifting of the underlying means. In general, using a linear embedding method will likely lead to a drop in both DP and MW for this distribution. Similarly, VAE does not provide sufficient separation between clusters. The BTGM experiments showed (i) the necessity of using non-linear embedding methods  and (ii) VaDE's ability to increase the separation between clusters.

The rescaling transformation leads to consistently better results for both DP and MW on multiple real datasets. In practice, the data distributions are correlated and overlapping, without clear cluster structure in high-dimensional space. 
The VaDE embedding leads to more concentrated clusters, and the rescaling transformation enlarges the separation enough for Ward's method to recover what is learned by VaDE.

\section{Conclusion}
We investigated the effectiveness of embedding data in Euclidean space to improve unsupervised hierarchical clustering methods. Motivated by unsupervised tasks such as taxonomy construction and phylogeny reconstruction, we studied the improvement gained by combining embedding methods with linkage-based clustering algorithms.
We saw that rescaling the VaDE latent space embedding, and then applying Ward's linkage-based method, leads to the best results for dendrogram purity and the Moseley-Wang objective on many datasets. To better understand the VaDE embedding, we proposed a planted hierarchical model using Gaussians with separated means. We proved that Ward's method recovers both the clustering and the hierarchy with high probability in this model. 
Compared to methods that use hyperbolic embeddings~\cite{gu2018learning, mathieu2019continuous, nickel2017poincare} or ultrametrics~\cite{chierchia2019ultrametric}, we believe that our approach will be easier to integrate into existing systems.

While we focus on embedding using variational autoencoders, an open direction for could involve embedding hierarchical structure using other representation learning methods~\cite{agarwal2007generalized, borg2005modern, caron2018deep, chierchia2019ultrametric, mishne2019diffusion, nina2019decoder, salakhutdinov2012one, tsai2017learning, yadav2019supervised}. 
Another direction is to better understand the similarities and differences between learned embeddings, comparison-based methods, and ordinal relations~\cite{emamjomeh2018adaptive, ghoshdastidar2019foundations,jamieson2011low, kazemi2018comparison}. Finally, a large number of clusters may be a limitation of the VaDE embedding, and this deserves attention.
\bibliographystyle{plain}
\setcitestyle{numbers}
\bibliography{main}

\begin{thebibliography}{10}

\bibitem{abboud2019subquadratic}
Amir Abboud, Vincent Cohen-Addad, and Hussein Houdroug{\'e}.
\newblock Subquadratic high-dimensional hierarchical clustering.
\newblock In {\em Advances in Neural Information Processing Systems}, pages
  11576--11586, 2019.

\bibitem{ackermann2014analysis}
Marcel~R Ackermann, Johannes Bl{\"o}mer, Daniel Kuntze, and Christian Sohler.
\newblock Analysis of agglomerative clustering.
\newblock {\em Algorithmica}, 69(1):184--215, 2014.

\bibitem{agarwal2007generalized}
Sameer Agarwal, Josh Wills, Lawrence Cayton, Gert Lanckriet, David Kriegman,
  and Serge Belongie.
\newblock Generalized non-metric multidimensional scaling.
\newblock In {\em Artificial Intelligence and Statistics}, pages 11--18, 2007.

\bibitem{ahmadian2019bisect}
Sara Ahmadian, Vaggos Chatziafratis, Alessandro Epasto, Euiwoong Lee, Mohammad
  Mahdian, Konstantin Makarychev, and Grigory Yaroslavtsev.
\newblock Bisect and conquer: Hierarchical clustering via max-uncut bisection.
\newblock {\em arXiv preprint arXiv:1912.06983}, 2019.

\bibitem{aljalbout2018clustering}
Elie Aljalbout, Vladimir Golkov, Yawar Siddiqui, Maximilian Strobel, and Daniel
  Cremers.
\newblock Clustering with deep learning: Taxonomy and new methods.
\newblock {\em arXiv preprint arXiv:1801.07648}, 2018.

\bibitem{balaban2019treecluster}
Metin Balaban, Niema Moshiri, Uyen Mai, Xingfan Jia, and Siavash Mirarab.
\newblock Treecluster: Clustering biological sequences using phylogenetic
  trees.
\newblock {\em PloS one}, 14(8), 2019.

\bibitem{balakrishnan2011noise}
Sivaraman Balakrishnan, Min Xu, Akshay Krishnamurthy, and Aarti Singh.
\newblock Noise thresholds for spectral clustering.
\newblock In {\em Advances in Neural Information Processing Systems}, pages
  954--962, 2011.

\bibitem{balcan2019learning}
Maria-Florina Balcan, Travis Dick, and Manuel Lang.
\newblock Learning to link.
\newblock {\em arXiv preprint arXiv:1907.00533}, 2019.

\bibitem{blum2020foundations}
Avrim Blum, John Hopcroft, and Ravindran Kannan.
\newblock {\em Foundations of data science}.
\newblock Cambridge University Press, 2020.

\bibitem{borg2005modern}
Ingwer Borg and Patrick~JF Groenen.
\newblock {\em Modern multidimensional scaling: Theory and applications}.
\newblock Springer Science \& Business Media, 2005.

\bibitem{caron2018deep}
Mathilde Caron, Piotr Bojanowski, Armand Joulin, and Matthijs Douze.
\newblock Deep clustering for unsupervised learning of visual features.
\newblock In {\em Proceedings of the European Conference on Computer Vision
  (ECCV)}, pages 132--149, 2018.

\bibitem{chami2020trees}
Ines Chami, Albert Gu, Vaggos Chatziafratis, and Christopher R{\'e}.
\newblock From trees to continuous embeddings and back: Hyperbolic hierarchical
  clustering.
\newblock {\em arXiv preprint arXiv:2010.00402}, 2020.

\bibitem{charikar2017approximate}
Moses Charikar and Vaggos Chatziafratis.
\newblock Approximate hierarchical clustering via sparsest cut and spreading
  metrics.
\newblock In {\em Proceedings of the Twenty-Eighth Annual ACM-SIAM Symposium on
  Discrete Algorithms}, pages 841--854. SIAM, 2017.

\bibitem{charikar2018hierarchical}
Moses Charikar, Vaggos Chatziafratis, Rad Niazadeh, and Grigory Yaroslavtsev.
\newblock Hierarchical clustering for euclidean data.
\newblock {\em arXiv preprint arXiv:1812.10582}, 2018.

\bibitem{chierchia2019ultrametric}
Giovanni Chierchia and Benjamin Perret.
\newblock Ultrametric fitting by gradient descent.
\newblock In {\em Advances in neural information processing systems}, pages
  3175--3186, 2019.

\bibitem{chung2006complex}
Fan Chung and Linyuan Lu.
\newblock {\em Complex graphs and networks}, volume 107.
\newblock American Mathematical Soc., 2006.

\bibitem{Cohen-Addad}
Vincent Cohen-Addad, Varun Kanade, and Frederik Mallmann-Trenn.
\newblock Hierarchical clustering beyond the worst-case.
\newblock In {\em Advances in Neural Information Processing Systems}, pages
  6201--6209, 2017.

\bibitem{dasgupta2002performance}
Sanjoy Dasgupta.
\newblock Performance guarantees for hierarchical clustering.
\newblock In {\em International Conference on Computational Learning Theory},
  pages 351--363. Springer, 2002.

\bibitem{dasgupta2016cost}
Sanjoy Dasgupta.
\newblock A cost function for similarity-based hierarchical clustering.
\newblock In {\em Proceedings of the forty-eighth annual ACM symposium on
  Theory of Computing}, pages 118--127, 2016.

\bibitem{de2018representation}
Christopher De~Sa, Albert Gu, Christopher R{\'e}, and Frederic Sala.
\newblock Representation tradeoffs for hyperbolic embeddings.
\newblock {\em Proceedings of machine learning research}, 80:4460, 2018.

\bibitem{dilokthanakul2016deep}
Nat Dilokthanakul, Pedro~AM Mediano, Marta Garnelo, Matthew~CH Lee, Hugh
  Salimbeni, Kai Arulkumaran, and Murray Shanahan.
\newblock Deep unsupervised clustering with gaussian mixture variational
  autoencoders.
\newblock {\em arXiv preprint arXiv:1611.02648}, 2016.

\bibitem{emamjomeh2018adaptive}
Ehsan Emamjomeh-Zadeh and David Kempe.
\newblock Adaptive hierarchical clustering using ordinal queries.
\newblock In {\em Proceedings of the Twenty-Ninth Annual ACM-SIAM Symposium on
  Discrete Algorithms}, pages 415--429. SIAM, 2018.

\bibitem{fard2018deep}
Maziar~Moradi Fard, Thibaut Thonet, and Eric Gaussier.
\newblock Deep $ k $-means: Jointly clustering with $ k $-means and learning
  representations.
\newblock {\em arXiv preprint arXiv:1806.10069}, 2018.

\bibitem{feller1968introduction}
William Feller.
\newblock {\em An introduction to probability theory and its applications. Vol.
  1}.
\newblock John Wiley \& Sons,, 1968.

\bibitem{figueroa2017simple}
Jhosimar~Arias Figueroa and Ad{\'\i}n~Ram{\'\i}rez Rivera.
\newblock Is simple better?: Revisiting simple generative models for
  unsupervised clustering.
\newblock In {\em Second workshop on Bayesian Deep Learning (NIPS)}, 2017.

\bibitem{friedman2001elements}
Jerome Friedman, Trevor Hastie, and Robert Tibshirani.
\newblock {\em The elements of statistical learning}, volume 1:10.
\newblock Springer Series in Statistics, New York, 2001.

\bibitem{ganea2018hyperbolic}
Octavian Ganea, Gary Becigneul, and Thomas Hofmann.
\newblock Hyperbolic entailment cones for learning hierarchical embeddings.
\newblock In {\em International Conference on Machine Learning}, pages
  1646--1655, 2018.

\bibitem{ghoshdastidar2019foundations}
Debarghya Ghoshdastidar, Micha{\"e}l Perrot, and Ulrike von Luxburg.
\newblock Foundations of comparison-based hierarchical clustering.
\newblock In {\em Advances in Neural Information Processing Systems}, pages
  7454--7464, 2019.

\bibitem{goldberger2005hierarchical}
Jacob Goldberger and Sam~T Roweis.
\newblock Hierarchical clustering of a mixture model.
\newblock In {\em Advances in Neural Information Processing Systems}, pages
  505--512, 2005.

\bibitem{goyal2017nonparametric}
Prasoon Goyal, Zhiting Hu, Xiaodan Liang, Chenyu Wang, and Eric~P Xing.
\newblock Nonparametric variational auto-encoders for hierarchical
  representation learning.
\newblock In {\em Proceedings of the IEEE International Conference on Computer
  Vision}, pages 5094--5102, 2017.

\bibitem{grosswendt2019analysis}
Anna Gro{\ss}wendt, Heiko R{\"o}glin, and Melanie Schmidt.
\newblock Analysis of ward's method.
\newblock In {\em Proceedings of the Thirtieth Annual ACM-SIAM Symposium on
  Discrete Algorithms}, pages 2939--2957. SIAM, 2019.

\bibitem{grosswendt_2020}
Anna-Klara Gro{\ss}wendt.
\newblock {\em {Theoretical Analysis of Hierarchical Clustering and the Shadow
  Vertex Algorithm}}.
\newblock PhD thesis, Universit\"{a}t Bonn, 2020.

\bibitem{gu2018learning}
Albert Gu, Frederic Sala, Beliz Gunel, and Christopher Ré.
\newblock Learning mixed-curvature representations in product spaces.
\newblock In {\em International Conference on Learning Representations}, 2019.

\bibitem{guo2017deep}
Xifeng Guo, Xinwang Liu, En~Zhu, and Jianping Yin.
\newblock Deep clustering with convolutional autoencoders.
\newblock In {\em International conference on neural information processing},
  pages 373--382. Springer, 2017.

\bibitem{heller2005bayesian}
Katherine~A Heller and Zoubin Ghahramani.
\newblock Bayesian hierarchical clustering.
\newblock In {\em Proceedings of the 22nd international conference on Machine
  learning}, pages 297--304, 2005.

\bibitem{Gaussian2016notes}
Daniel Hsu.
\newblock {Columbia COMS 4772 Fall 2016, Lecture Notes: High-dimensional
  Gaussians}, 2016.
\newblock URL:
  \url{https://www.cs.columbia.edu/~djhsu/coms4772-f16/lectures/gaussians.md.handout.pdf}.
  Last visited on 2020/05/22.

\bibitem{jamieson2011low}
Kevin~G Jamieson and Robert~D Nowak.
\newblock Low-dimensional embedding using adaptively selected ordinal data.
\newblock In {\em 2011 49th Annual Allerton Conference on Communication,
  Control, and Computing (Allerton)}, pages 1077--1084. IEEE, 2011.

\bibitem{jiang2017variational}
Zhuxi Jiang, Yin Zheng, Huachun Tan, Bangsheng Tang, and Hanning Zhou.
\newblock Variational deep embedding: an unsupervised and generative approach
  to clustering.
\newblock In {\em Proceedings of the 26th International Joint Conference on
  Artificial Intelligence}, pages 1965--1972, 2017.

\bibitem{kazemi2018comparison}
Ehsan Kazemi, Lin Chen, Sanjoy Dasgupta, and Amin Karbasi.
\newblock Comparison based learning from weak oracles.
\newblock In {\em International Conference on Artificial Intelligence and
  Statistics}, pages 1849--1858, 2018.

\bibitem{king1967step}
Benjamin King.
\newblock Step-wise clustering procedures.
\newblock {\em Journal of the American Statistical Association},
  62(317):86--101, 1967.

\bibitem{kingma2014auto}
Diederik~P Kingma and Max Welling.
\newblock Auto-encoding variational bayes.
\newblock {\em stat}, 1050:1, 2014.

\bibitem{klushyn2019learning}
Alexej Klushyn, Nutan Chen, Richard Kurle, Botond Cseke, and Patrick van~der
  Smagt.
\newblock Learning hierarchical priors in vaes.
\newblock In {\em Advances in Neural Information Processing Systems}, pages
  2866--2875, 2019.

\bibitem{kobren2017hierarchical}
Ari Kobren, Nicholas Monath, Akshay Krishnamurthy, and Andrew McCallum.
\newblock A hierarchical algorithm for extreme clustering.
\newblock In {\em Proceedings of the 23rd ACM SIGKDD International Conference
  on Knowledge Discovery and Data Mining}, pages 255--264, 2017.

\bibitem{kpotufepruning}
Samory Kpotufe and Ulrike von Luxburg.
\newblock Pruning nearest neighbor cluster trees.
\newblock In {\em International Conference on Machine Learning (ICML)}, 2011.

\bibitem{CIFAR100}
Alex Krizhevsky.
\newblock Learning multiple layers of features from tiny images.
\newblock Technical report, University of Toronto, 2009.

\bibitem{lance1967general}
Godfrey~N Lance and William~Thomas Williams.
\newblock A general theory of classificatory sorting strategies: 1.
  hierarchical systems.
\newblock {\em The computer journal}, 9(4):373--380, 1967.

\bibitem{Lang95newsgroup}
Ken Lang.
\newblock Newsweeder: Learning to filter netnews.
\newblock In {\em Proceedings of the Twelfth International Conference on
  Machine Learning}, pages 331--339, 1995.

\bibitem{MNIST}
Y.~{Lecun}, L.~{Bottou}, Y.~{Bengio}, and P.~{Haffner}.
\newblock Gradient-based learning applied to document recognition.
\newblock {\em Proceedings of the IEEE}, 86(11):2278--2324, 1998.

\bibitem{li2018discriminatively}
Fengfu Li, Hong Qiao, and Bo~Zhang.
\newblock Discriminatively boosted image clustering with fully convolutional
  auto-encoders.
\newblock {\em Pattern Recognition}, 83:161--173, 2018.

\bibitem{li2018learning}
Xiaopeng Li, Zhourong Chen, Leonard~KM Poon, and Nevin~L Zhang.
\newblock Learning latent superstructures in variational autoencoders for deep
  multidimensional clustering.
\newblock In {\em International Conference on Learning Representations}, 2018.

\bibitem{lin2010general}
Guolong Lin, Chandrashekhar Nagarajan, Rajmohan Rajaraman, and David~P
  Williamson.
\newblock A general approach for incremental approximation and hierarchical
  clustering.
\newblock {\em SIAM Journal on Computing}, 39(8):3633--3669, 2010.

\bibitem{linial1995geometry}
Nathan Linial, Eran London, and Yuri Rabinovich.
\newblock The geometry of graphs and some of its algorithmic applications.
\newblock {\em Combinatorica}, 15(2):215--245, 1995.

\bibitem{Amini09RCV1}
C.~Goutte M.-R.~Amini, N.~Usunier.
\newblock Learning from multiple partially observed views - an application to
  multilingual text categorization.
\newblock In {\em Advances in Neural Information Processing Systems}, pages
  28--36, 2009.

\bibitem{manning2008introduction}
Christopher~D Manning, Prabhakar Raghavan, and Hinrich Sch{\"u}tze.
\newblock {\em Introduction to information retrieval}.
\newblock Cambridge university press, 2008.

\bibitem{mathieu2019continuous}
Emile Mathieu, Charline Le~Lan, Chris~J Maddison, Ryota Tomioka, and Yee~Whye
  Teh.
\newblock Continuous hierarchical representations with poincar{\'e} variational
  auto-encoders.
\newblock In {\em Advances in neural information processing systems}, pages
  12544--12555, 2019.

\bibitem{min2018survey}
Erxue Min, Xifeng Guo, Qiang Liu, Gen Zhang, Jianjing Cui, and Jun Long.
\newblock A survey of clustering with deep learning: From the perspective of
  network architecture.
\newblock {\em IEEE Access}, 6:39501--39514, 2018.

\bibitem{mishne2019diffusion}
Gal Mishne, Uri Shaham, Alexander Cloninger, and Israel Cohen.
\newblock Diffusion nets.
\newblock {\em Applied and Computational Harmonic Analysis}, 47(2):259--285,
  2019.

\bibitem{monath2019gradient}
Nicholas Monath, Manzil Zaheer, Daniel Silva, Andrew McCallum, and Amr Ahmed.
\newblock Gradient-based hierarchical clustering using continuous
  representations of trees in hyperbolic space.
\newblock In {\em 25th ACM SIGKDD International Conference on Knowledge
  Discovery \& Data Mining}, pages 714--722, 2019.

\bibitem{MoseleyWang}
Benjamin Moseley and Joshua Wang.
\newblock Approximation bounds for hierarchical clustering: Average linkage,
  bisecting k-means, and local search.
\newblock In {\em Advances in Neural Information Processing Systems}, pages
  3094--3103, 2017.

\bibitem{nalisnick2016approximate}
Eric Nalisnick, Lars Hertel, and Padhraic Smyth.
\newblock {Approximate Inference for Deep Latent Gaussian Mixtures}.
\newblock In {\em NIPS Workshop on Bayesian Deep Learning}, 2016.

\bibitem{nalisnick2017stick}
Eric Nalisnick and Padhraic Smyth.
\newblock Stick-breaking variational autoencoders.
\newblock In {\em International Conference on Learning Representations (ICLR)},
  2017.

\bibitem{nickel2017poincare}
Maximillian Nickel and Douwe Kiela.
\newblock Poincar{\'e} embeddings for learning hierarchical representations.
\newblock In {\em Advances in neural information processing systems (NIPS)},
  pages 6338--6347, 2017.

\bibitem{nina2019decoder}
Oliver Nina, Jamison Moody, and Clarissa Milligan.
\newblock A decoder-free approach for unsupervised clustering and manifold
  learning with random triplet mining.
\newblock In {\em Proceedings of the IEEE International Conference on Computer
  Vision Workshops}, pages 0--0, 2019.

\bibitem{reynolds2006clustering}
Alan~P Reynolds, Graeme Richards, Beatriz de~la Iglesia, and Victor~J
  Rayward-Smith.
\newblock Clustering rules: a comparison of partitioning and hierarchical
  clustering algorithms.
\newblock {\em Journal of Mathematical Modelling and Algorithms},
  5(4):475--504, 2006.

\bibitem{rezende2014stochastic}
Danilo~Jimenez Rezende, Shakir Mohamed, and Daan Wierstra.
\newblock Stochastic backpropagation and approximate inference in deep
  generative models.
\newblock In {\em International Conference on Machine Learning (ICML)}, pages
  1278--1286, 2014.

\bibitem{roux2018comparative}
Maurice Roux.
\newblock A comparative study of divisive and agglomerative hierarchical
  clustering algorithms.
\newblock {\em Journal of Classification}, 35(2):345--366, 2018.

\bibitem{salakhutdinov2012one}
Ruslan Salakhutdinov, Joshua Tenenbaum, and Antonio Torralba.
\newblock One-shot learning with a hierarchical nonparametric bayesian model.
\newblock In {\em Proceedings of ICML Workshop on Unsupervised and Transfer
  Learning}, pages 195--206, 2012.

\bibitem{sarkar2011low}
Rik Sarkar.
\newblock Low distortion delaunay embedding of trees in hyperbolic plane.
\newblock In {\em International Symposium on Graph Drawing}, pages 355--366.
  Springer, 2011.

\bibitem{shang2020nettaxo}
Jingbo Shang, Xinyang Zhang, Liyuan Liu, Sha Li, and Jiawei Han.
\newblock Nettaxo: Automated topic taxonomy construction from text-rich
  network.
\newblock In {\em Proceedings of The Web Conference 2020}, pages 1908--1919,
  2020.

\bibitem{sharma2019comparative}
Shweta Sharma, Neha Batra, et~al.
\newblock Comparative study of single linkage, complete linkage, and ward
  method of agglomerative clustering.
\newblock In {\em 2019 International Conference on Machine Learning, Big Data,
  Cloud and Parallel Computing (COMITCon)}, pages 568--573. IEEE, 2019.

\bibitem{shin2019hierarchically}
Su-Jin Shin, Kyungwoo Song, and Il-Chul Moon.
\newblock Hierarchically clustered representation learning.
\newblock {\em arXiv preprint arXiv:1901.09906}, 2019.

\bibitem{sneath1973numerical}
Peter~HA Sneath and Robert~R Sokal.
\newblock {\em Numerical taxonomy. The principles and practice of numerical
  classification.}
\newblock W.H. Freeman, 1973.

\bibitem{sonthalia2020tree}
Rishi Sonthalia and Anna~C. Gilbert.
\newblock Tree! i am no tree! i am a low dimensional hyperbolic embedding,
  2020.

\bibitem{szekely2005hierarchical}
Gabor~J Szekely and Maria~L Rizzo.
\newblock Hierarchical clustering via joint between-within distances: Extending
  ward's minimum variance method.
\newblock {\em Journal of classification}, 22(2), 2005.

\bibitem{teh2008bayesian}
Yee~W Teh, Hal Daume~III, and Daniel~M Roy.
\newblock Bayesian agglomerative clustering with coalescents.
\newblock In {\em Advances in Neural Information Processing Systems}, pages
  1473--1480, 2008.

\bibitem{tifrea2018poincar}
Alexandru Tifrea, Gary B{\'e}cigneul, and Octavian-Eugen Ganea.
\newblock {Poincare GloVe: Hyperbolic Word Embeddings}.
\newblock {\em arXiv preprint arXiv:1810.06546}, 2018.

\bibitem{tokuda2020revisiting}
Eric~K. Tokuda, Cesar~H. Comin, and Luciano da~F.~Costa.
\newblock Revisiting agglomerative clustering.
\newblock {\em arXiv preprint arXiv:2005.07995}, 2020.

\bibitem{tomczak2018vae}
Jakub Tomczak and Max Welling.
\newblock Vae with a vampprior.
\newblock In {\em International Conference on Artificial Intelligence and
  Statistics}, pages 1214--1223, 2018.

\bibitem{tsai2017learning}
Yao-Hung~Hubert Tsai, Liang-Kang Huang, and Ruslan Salakhutdinov.
\newblock Learning robust visual-semantic embeddings.
\newblock In {\em 2017 IEEE International Conference on Computer Vision
  (ICCV)}, pages 3591--3600. IEEE, 2017.

\bibitem{tzoreff2018deep}
Elad Tzoreff, Olga Kogan, and Yoni Choukroun.
\newblock Deep discriminative latent space for clustering.
\newblock {\em arXiv preprint arXiv:1805.10795}, 2018.

\bibitem{uugur2020variational}
Yi{\u{g}}it U{\u{g}}ur and Abdellatif Zaidi.
\newblock Variational information bottleneck for unsupervised clustering: Deep
  gaussian mixture embedding.
\newblock {\em Entropy}, 22(2):213, 2020.

\bibitem{vasconcelos1999learning}
Nuno Vasconcelos and Andrew Lippman.
\newblock Learning mixture hierarchies.
\newblock In {\em Advances in Neural Information Processing Systems}, pages
  606--612, 1999.

\bibitem{ward1963hierarchical}
Joe~H Ward~Jr.
\newblock Hierarchical grouping to optimize an objective function.
\newblock {\em Journal of the American statistical association},
  58(301):236--244, 1963.

\bibitem{xie2016unsupervised}
Junyuan Xie, Ross Girshick, and Ali Farhadi.
\newblock Unsupervised deep embedding for clustering analysis.
\newblock In {\em International conference on machine learning}, pages
  478--487, 2016.

\bibitem{yadav2019supervised}
Nishant Yadav, Ari Kobren, Nicholas Monath, and Andrew McCallum.
\newblock Supervised hierarchical clustering with exponential linkage.
\newblock In {\em International Conference on Machine Learning (ICML)}, 2019.

\bibitem{yang2017towards}
Bo~Yang, Xiao Fu, Nicholas~D Sidiropoulos, and Mingyi Hong.
\newblock Towards k-means-friendly spaces: Simultaneous deep learning and
  clustering.
\newblock In {\em Proceedings of the 34th International Conference on Machine
  Learning-Volume 70}, pages 3861--3870. JMLR. org, 2017.

\bibitem{zhang2018taxogen}
Chao Zhang, Fangbo Tao, Xiusi Chen, Jiaming Shen, Meng Jiang, Brian Sadler,
  Michelle Vanni, and Jiawei Han.
\newblock Taxogen: Unsupervised topic taxonomy construction by adaptive term
  embedding and clustering.
\newblock In {\em Proceedings of the 24th ACM SIGKDD International Conference
  on Knowledge Discovery \& Data Mining}, pages 2701--2709, 2018.

\bibitem{BIRCH1996}
Tian Zhang, Raghu Ramakrishnan, and Miron Livny.
\newblock Birch: An efficient data clustering method for very large databases.
\newblock In {\em Proceedings of the 1996 ACM SIGMOD International Conference
  on Management of Data}, SIGMOD ’96, page 103–114, New York, NY, USA,
  1996. Association for Computing Machinery.

\end{thebibliography}

\appendix
\section{Proof of the Theorems}
\label{theorem_proof}
The next two lemmas provide tight tail bounds for binomial distributions and spherical Gaussians.
 \begin{Lemma}[Concentration of binomial random variables]\label{con_bin}\cite{chung2006complex}
Let $X$ be a binomial random variable with mean $\mu$, then for any $\delta>0$, 
\[
\Pr(X\geq{(1+\delta)\mu}) 
\leq{e^{-(\max\{\delta^2,\delta\}) \mu/3}},
\]
and for any $\delta\in{(0, 1)}$, 
\[
\Pr(X\leq{(1-\delta)\mu}) 
\leq{e^{-\delta^2\mu/2}}.
\]
\end{Lemma}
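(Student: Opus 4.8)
\ The plan is the classical Chernoff (exponential-moment) method, reduced at the end to two elementary scalar inequalities. Write $X=\sum_i Y_i$ as a sum of independent Bernoulli variables with $\Pr(Y_i=1)=p_i$, so that $\mu=\sum_i p_i$; this covers both the textbook case $X\sim\mathrm{Bin}(N,p)$ and the Poisson-binomial case. For the upper tail, fix $t>0$ and apply Markov's inequality to $e^{tX}$, using independence and $1+x\le e^{x}$:
\[
\Pr\bigl(X\ge (1+\delta)\mu\bigr)\le e^{-t(1+\delta)\mu}\,\mathbb{E}\bigl[e^{tX}\bigr]=e^{-t(1+\delta)\mu}\prod_i\bigl(1+p_i(e^{t}-1)\bigr)\le\exp\!\bigl(\mu(e^{t}-1)-t(1+\delta)\mu\bigr).
\]
Minimizing the exponent over $t$ gives $t=\ln(1+\delta)>0$, hence $\Pr(X\ge(1+\delta)\mu)\le\bigl(e^{\delta}(1+\delta)^{-(1+\delta)}\bigr)^{\mu}=\exp(\mu\,\phi(\delta))$ with $\phi(\delta):=\delta-(1+\delta)\ln(1+\delta)$.

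It then suffices to prove the scalar bounds $\phi(\delta)\le-\delta^{2}/3$ for $0<\delta\le1$ and $\phi(\delta)\le-\delta/3$ for $\delta\ge1$; together these give $\Pr(X\ge(1+\delta)\mu)\le e^{-\max\{\delta^{2},\delta\}\mu/3}$. For the first, I would use the power series $(1+\delta)\ln(1+\delta)=\delta+\sum_{k\ge2}(-1)^{k}\delta^{k}/(k(k-1))$, which gives $\phi(\delta)+\delta^{2}/3=-\tfrac{\delta^{2}}{6}(1-\delta)+\sum_{k\ge4}(-1)^{k+1}\delta^{k}/(k(k-1))$; the first term is $\le0$ on $[0,1]$, and the tail is an alternating series with non-increasing magnitudes and negative leading term, hence also $\le0$. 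For the second, set $\psi(\delta):=\phi(\delta)+\delta/3=\tfrac{4\delta}{3}-(1+\delta)\ln(1+\delta)$; then $\psi'(\delta)=\tfrac13-\ln(1+\delta)<0$ for $\delta\ge1$ (since $\ln2>\tfrac13$) and $\psi(1)=\tfrac43-2\ln2<0$, so $\psi\le0$ on $[1,\infty)$.

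For the lower tail I would run the same argument on $-X$: for $t>0$, $\mathbb{E}[e^{-tX}]\le\exp(\mu(e^{-t}-1))$, so $\Pr(X\le(1-\delta)\mu)\le\exp(\mu(e^{-t}-1)+t(1-\delta)\mu)$, and since $\delta\in(0,1)$ the minimizer $t=-\ln(1-\delta)$ is positive and admissible, yielding $\Pr(X\le(1-\delta)\mu)\le\bigl(e^{-\delta}(1-\delta)^{-(1-\delta)}\bigr)^{\mu}$. It remains to show $-\delta-(1-\delta)\ln(1-\delta)\le-\delta^{2}/2$ on $(0,1)$, which follows from the termwise-nonnegative series $(1-\delta)\ln(1-\delta)=-\delta+\sum_{k\ge2}\delta^{k}/(k(k-1))\ge-\delta+\delta^{2}/2$.

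The only genuine work is the elementary function analysis in the middle paragraph, and within it the case split at $\delta=1$: there the guaranteed exponent degrades from quadratic $\delta^{2}/3$ to linear $\delta/3$, and one must verify monotonicity of the auxiliary function $\psi$ together with the single boundary value $\psi(1)<0$. Everything else is the standard Chernoff template and carries over verbatim. (The authors may instead simply invoke these bounds from \cite{chung2006complex}; the self-contained route sketched here is short.)
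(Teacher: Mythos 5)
The paper offers no proof of this lemma at all—it is imported verbatim (citation to Chung--Lu), and the only place it is used is in the proof of Theorem~\ref{WardMethod} with $\delta$ of constant size, where the $\min$/$\max$ distinction below is immaterial. So there is no ``paper proof'' to compare against; your Chernoff derivation is the standard and appropriate self-contained route, and the computations in it (the MGF bound $\mathbb{E}[e^{tX}]\le\exp(\mu(e^t-1))$, the optimizer $t=\ln(1+\delta)$, the series identity $(1+\delta)\ln(1+\delta)=\delta+\sum_{k\ge2}(-1)^k\delta^k/(k(k-1))$, the alternating-tail argument, the monotonicity of $\psi$, and the entire lower-tail paragraph) all check out.

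There is, however, a genuine logical gap in the final assembly of the upper tail. You prove $\phi(\delta)\le-\delta^2/3$ on $(0,1]$ and $\phi(\delta)\le-\delta/3$ on $[1,\infty)$; on each regime the exponent you control is the \emph{smaller} of $\delta^2$ and $\delta$ (for $\delta\le1$ one has $\delta^2\le\delta$, for $\delta\ge1$ one has $\delta\le\delta^2$). Hence what these two facts combine to is $\Pr(X\ge(1+\delta)\mu)\le e^{-\min\{\delta^2,\delta\}\mu/3}$, not the stated $e^{-\max\{\delta^2,\delta\}\mu/3}$; your sentence ``together these give'' the $\max$ form is a non sequitur. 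Moreover, the $\max$ form cannot be rescued by a better argument: it is false. Take $X\sim\mathrm{Bin}(n,1/2)$, $\mu=n/2$, $\delta=1/\sqrt n$; then $\Pr(X\ge\mu+\sqrt n/2)=\Pr(X\ge\mu+\sigma)\to\Pr(Z\ge1)\approx0.16$ by the CLT, whereas $e^{-\delta\mu/3}=e^{-\sqrt n/6}\to0$. The correct reading is that the lemma as printed contains a typo ($\max$ should be $\min$, which is the form in Chung--Lu and the form your argument actually establishes); you should state that you are proving the $\min$ version rather than claim the $\max$ version follows. With that one correction your proof is complete.
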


\begin{Lemma}[Concentration of spherical Gaussians]\label{con_gaussian}\cite{Gaussian2016notes}
For a random 
$X$ distributed according to a spherical Gaussian in $d$ dimensions (mean $\mu$ and variance $\sigma^2$ in each direction),
and any $\delta\in (0,1)$,
\[
\Pr\left(
\norm{X-\mu}_2^2 \le \sigma^2 d \left(
1 + 2\sqrt{\frac{\log(1/\delta)}{d}} + \frac{2\log(1/\delta)}{d}
\right)\right)
\ge 1-\delta.
\]
\end{Lemma}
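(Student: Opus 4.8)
The plan is to reduce the statement to a one-sided tail bound for a chi-squared random variable and then prove that tail by the exponential Markov (Chernoff) method. First I would normalize: writing $Y := (X-\mu)/\sigma$, each coordinate $Y_1,\dots,Y_d$ is an independent standard Gaussian, so $Z := \norm{X-\mu}_2^2/\sigma^2 = \sum_{i=1}^d Y_i^2$ is chi-squared with $d$ degrees of freedom and $\mathbb{E}[Z]=d$. Setting $t := \log(1/\delta)$, the event in the lemma is exactly $\{Z \le d + 2\sqrt{dt} + 2t\}$, since $d(1+2\sqrt{t/d}+2t/d)=d+2\sqrt{dt}+2t$. Hence it suffices to establish the upper-tail inequality $\Pr(Z \ge d + 2\sqrt{dt} + 2t) \le e^{-t} = \delta$.

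Next I would apply the Chernoff bound. Using independence and the elementary identity $\mathbb{E}[e^{\lambda Y_i^2}] = (1-2\lambda)^{-1/2}$ for $0<\lambda<1/2$, the moment generating function factorizes as $\mathbb{E}[e^{\lambda Z}] = (1-2\lambda)^{-d/2}$. Exponential Markov then yields, for every $\lambda\in(0,1/2)$ and $s:=2\sqrt{dt}+2t$,
\[
\Pr(Z \ge d + s) \le \exp\bigl(\psi(\lambda) - \lambda s\bigr), \qquad \psi(\lambda) := -\lambda d - \tfrac{d}{2}\log(1-2\lambda),
\]
where $\psi$ is the log-MGF of the centered variable $Z-d$. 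The technical core is the sub-exponential bound $\psi(\lambda) \le \frac{d\lambda^2}{1-2\lambda}$ on $[0,1/2)$, which I would prove by checking that $g(\lambda):=\frac{\lambda^2}{1-2\lambda}+\lambda+\tfrac12\log(1-2\lambda)$ satisfies $g(0)=0$ and $g'(\lambda)=\frac{2\lambda^2}{(1-2\lambda)^2}\ge 0$, so $g\ge 0$ throughout. This casts $\psi$ into the Bernstein form $\frac{v\lambda^2}{2(1-c\lambda)}$ with variance proxy $v=2d$ and scale $c=2$.

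Finally I would carry out the scalar optimization of the Chernoff exponent. Substituting the sub-exponential bound, it remains to lower bound the Cramér rate $\sup_{\lambda\in(0,1/2)}\bigl(\lambda s - \frac{d\lambda^2}{1-2\lambda}\bigr)$, whose exact value is $\frac{d}{2}\,h(s/d)$ with $h(u)=1+u-\sqrt{1+2u}$. Inverting $h$ via $h(w+\sqrt{2w})=w$, one checks that $\frac{d}{2}h(s/d)=t$ holds \emph{precisely} when $s=2\sqrt{dt}+2t$, so the exponent equals $-t$ and the tail is at most $e^{-t}$, completing the proof. I expect this last step to be the main obstacle: the inequality is tight (crude estimates such as $u-\log(1+u)\ge \frac{u^2}{2(1+u)}$ are not sharp enough to recover the claimed constants $2\sqrt{dt}$ and $2t$), so the precise shape of the bound hinges on the exact minimization through the $h$-function rather than a loose sub-exponential substitute. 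The normalization and MGF computation of the earlier steps are, by contrast, routine.
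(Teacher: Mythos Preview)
Your argument is correct and is essentially the classical Laurent--Massart proof of the chi-squared upper tail: normalize to $Z\sim\chi^2_d$, apply Chernoff with the MGF $(1-2\lambda)^{-d/2}$, control the centered log-MGF by the Bernstein-type envelope $\psi(\lambda)\le d\lambda^2/(1-2\lambda)$, and then invert the Cram\'er transform exactly via $h(u)=1+u-\sqrt{1+2u}$ to land on the threshold $s=2\sqrt{dt}+2t$. Each step checks out, including the derivative computation $g'(\lambda)=2\lambda^2/(1-2\lambda)^2$ and the inversion $h(w+\sqrt{2w})=w$.

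The only point of comparison with the paper is that there is nothing to compare: the paper does not prove this lemma at all. It is stated as Lemma~\ref{con_gaussian} with a citation to lecture notes and is then invoked as a black box inside the proof of Theorem~\ref{WardMethod} (to bound each sample's deviation from its component mean by $S_i=\sigma_i(\sqrt{d}+2\sqrt{\log n})$ with failure probability $n^{-2}$). So your write-up supplies strictly more than the paper does. If you want to align with how the paper uses the lemma, you might additionally remark that $\sqrt{d+2\sqrt{dt}+2t}\le \sqrt{d}+\sqrt{2t}$, which is the form the paper implicitly relies on when it sets $S_i=\sigma_i(\sqrt{d}+2\sqrt{\log n})$ with $\delta=n^{-2}$.
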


\begin{Lemma}[Optimal clustering through Ward's method]\label{opt_ward}\cite{grosswendt2019analysis, grosswendt_2020}
Let $P\subseteq \mathbb R^d$ be a collection of points with underlying clustering $C_1^*,\ldots, C_k^*$, 
such that the corresponding cluster mean values $\mu_1^*,\ldots, \mu_k^*$ satisfy 
$\norm{\mu_i^*-\mu_j^*}_2> (2+2\sqrt{2\nu})\max_{x\in C_i^*}\norm{x-\mu_i^*}_2$ for all $i\not=j$,
where $\nu := \max_{\ell,t\in [k]}\frac{|C_\ell^*|}{|C_t^*|}$ is the maximum ratio between cluster sizes. 
Then Ward's method recovers the underlying clustering. 
\end{Lemma}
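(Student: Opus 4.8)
The plan is to reduce the claim to a single invariant: at every stage of Ward's method at which more than $k$ clusters remain, the current partition \emph{refines} the planted clustering $C_1^*,\dots,C_k^*$, meaning each current cluster lies inside some $C_i^*$. Since the algorithm starts from singletons (which trivially refine the planted clustering) and we stop it at $k$ clusters, maintaining this invariant all the way down forces the output $k$-clustering to be exactly $C_1^*,\dots,C_k^*$. The whole argument then becomes an induction whose inductive step asserts: if the current partition refines the planted clustering and more than $k$ clusters remain, then the cheapest Ward merge joins two pieces of a single $C_m^*$ (an ``intra'' merge), which preserves refinement.

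First I would record the Lance--Williams identity, that merging clusters $A$ and $B$ increases the sum of squared errors by $\Delta(A,B)=\frac{|A||B|}{|A|+|B|}\norm{\mu(A)-\mu(B)}_2^2$, together with the elementary two-sided bound $\tfrac12\min(|A|,|B|)\le \frac{|A||B|}{|A|+|B|}\le \min(|A|,|B|)$. Writing $r_i:=\max_{x\in C_i^*}\norm{x-\mu_i^*}_2$, two facts drive everything: the mean of any sub-cluster $A\subseteq C_i^*$ lies within distance $r_i$ of $\mu_i^*$ (being a convex combination of points of $C_i^*$), and the separation hypothesis rewrites, for $i\ne j$, as $\norm{\mu_i^*-\mu_j^*}_2>(2+2\sqrt{2\nu})\max(r_i,r_j)$, by applying it to both orderings of the pair.

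From these I would bound the two merge types. For an intra merge of pieces $A,A'\subseteq C_m^*$ the means are within $2r_m$ of each other, so $\Delta(A,A')\le 4\min(|A|,|A'|)\,r_m^2$. For an inter merge of $A\subseteq C_i^*$ and $B\subseteq C_j^*$, the triangle inequality gives $\norm{\mu(A)-\mu(B)}_2\ge \norm{\mu_i^*-\mu_j^*}_2-r_i-r_j>2\sqrt{2\nu}\max(r_i,r_j)$, whence $\Delta(A,B)>4\nu\min(|A|,|B|)\max(r_i,r_j)^2$. Now suppose toward a contradiction that the globally cheapest merge is an inter merge between such $A,B$ with $|A|\le|B|$. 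If the planted cluster $C_i^*$ of the smaller piece is split, merging $A$ with another piece of $C_i^*$ costs at most $4|A|\max(r_i,r_j)^2$, which is strictly below $4\nu|A|\max(r_i,r_j)^2<\Delta(A,B)$ since $\nu\ge1$; the factor $8\nu$ coming from $(2\sqrt{2\nu})^2$ is exactly what absorbs the factor $4$ from the intra bound, and the strict separation handles the boundary $\nu=1$. If instead $C_i^*$ is complete but $C_j^*$ is split, the symmetric intra merge inside $C_j^*$ costs at most $4|B|r_j^2$, and here the size-ratio normalization enters: $|B|\le|C_j^*|\le \nu\,|C_i^*|=\nu|A|$, again yielding a cheaper intra merge and a contradiction.

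The main obstacle I anticipate is the remaining case, in which the globally cheapest merge joins two clusters that are both \emph{already complete} ($A=C_i^*$, $B=C_j^*$) while some third, unrelated cluster $C_m^*$ is still split. Here the intra bound involves $r_m$ and $|C_m^*|$ whereas the inter bound involves $r_i,r_j$ and $|C_i^*|,|C_j^*|$, and the separation hypothesis provides no direct link between the scales of different clusters, so the clean cancellation above is unavailable. Resolving this requires a genuinely global comparison --- showing that reassembling any split cluster is always cheaper than fusing two complete, well-separated clusters --- and it is precisely for this step that the constant $2+2\sqrt{2\nu}$ (rather than a smaller one) and the size-ratio normalization are calibrated; this is the technical heart delegated to \cite{grosswendt2019analysis, grosswendt_2020}. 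Once the inductive step holds in all cases, the invariant persists until exactly $k$ clusters remain, at which point the partition equals $C_1^*,\dots,C_k^*$, completing the proof.
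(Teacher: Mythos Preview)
The paper does not prove this lemma; it is quoted from \cite{grosswendt2019analysis, grosswendt_2020} and invoked as a black box in the proofs of Theorems~\ref{WardMethod} and~\ref{thm:ward-hierarchical}. So there is no in-paper argument to compare your proposal against, and your outline---maintain the refinement invariant and compare intra- versus inter-cluster Ward costs via the Lance--Williams identity---is indeed the standard route taken in those references. Your handling of the first two cases is correct.

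Your concern about the third case, however, is not a mere technicality to be ``delegated'': under the hypotheses \emph{exactly as printed in this paper}, that case can actually occur, so the statement as transcribed here is false. Take $k=3$, all clusters of size~$2$ (so $\nu=1$ and $2+2\sqrt{2\nu}\approx 4.83$), with
\[
C_1^*=\{(\pm 1,0)\},\qquad C_2^*=\{(100\pm 1,0)\},\qquad C_3^*=\{(-1000,\pm 200)\},
\]
so $r_1=r_2=1$, $r_3=200$. Every ordered-pair condition $\|\mu_i^*-\mu_j^*\|_2>(2+2\sqrt{2})\,r_i$ is satisfied (the tightest is $\|\mu_3^*-\mu_1^*\|_2=1000>4.83\cdot 200$). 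Yet after Ward completes $C_1^*$ and $C_2^*$ (each at cost~$2$), merging $C_1^*$ with $C_2^*$ costs $\tfrac{2\cdot 2}{4}\cdot 100^2=10^4$ while reassembling $C_3^*$ costs $\tfrac{1}{2}\cdot 400^2=8\times 10^4$, so Ward outputs the wrong $3$-clustering. The stated hypothesis controls $\|\mu_i^*-\mu_j^*\|_2$ only against $\max(r_i,r_j)$, never against a foreign $r_m$---precisely the comparison you found yourself unable to make. The result in \cite{grosswendt2019analysis, grosswendt_2020} presumably carries a slightly stronger hypothesis (a uniform bound against $\max_\ell r_\ell$, or an equivalent formulation), which is exactly what is needed to close your third case; you should verify the precise statement in those sources rather than relying on the version printed here.
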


Now we are ready to prove Theorem 1~\ref{WardMethod}. 
\begin{proof}[\bf{Proof of Theorem~\ref{WardMethod}}]
 First, for each component in the mixture, we determine the associated number of 
    sample points. This is equivalent to drawing a size-$n$ sample from the weighting distribution 
    that assigns $w_i>0$ to each $i\in [k]$. The number of times each symbol $i\in [k]$ appearing in the sample,
    which we refer to as the (sample) \emph{multiplicity} of the symbol,
    follows a binomial distribution with mean $nw_i$. 
    By Lemma~\ref{con_bin}, the probability that an arbitrary multiplicity 
    is within a factor of $2$ from its mean value is at least $1-2e^{-nw_i/8}$.
    Hence, for $n\ge 16(\log k)/w_i$, the union bound yields that this deviation bound
    holds for all symbols with probability at least $1-k\cdot k^{-2} = 1-k^{-1}$.  
    
 Second, given the symbol multiplicities, we draw sample points from each component 
    correspondingly. Consider the $i$-th component with mean $\mu_i$. Then, by Lemma~\ref{con_gaussian}, 
    every sample point from this component deviates (under Euclidean distance) from $\mu_i$ by at most 
    $S_i:=\sigma_i(\sqrt{d}+2\sqrt{\log n})$, with probability at least $1-n^{-2}$. 
    Again, the union bound yields that this deviation bound holds for all components and all sample points
    with probability at least $1-n\cdot n^{-2} = 1 - n^{-1}$.

 Third, by the above reasoning and $n>k$, with probability at least $1-2/k$ and for all $i\in [k]$,
 \begin{enumerate}
     \item the sample size associated with the $i$-th component is between $nw_i/2$ and $2nw_i$; 
     \item every sample point from the $i$-th component deviates from $\mu_i$ by at most $S_i$. 
 \end{enumerate}

    Therefore, if further $\norm{\mu_i-\mu_j}_2\ge (5+12\max_{\ell,t\in [k]} \sqrt{w_\ell/w_t})(S_i+S_j)$ for all $i\not=j$,
    the sample points will form well-separated clusters according to the underlying components.
    In addition, the first assumption ensures that the maximum ratio between cluster sizes is 
    at most $4\max_{\ell,t\in [k]} \sqrt{w_\ell/w_t}$, and the second assumption implies that for any $i\not= j\in [k]$,
    \begin{enumerate}
    \item the $i$-th cluster
    the distance between the mean to any point in it is at most $2S_i$;
    \item the empirical means of the $i$-th and $j$-th clusters are separated by at least 
    \[
    16(S_i+S_j)>\left(2+2\sqrt{2\cdot 4}\cdot \max_{\ell,t\in [k]} \sqrt{\frac{w_\ell}{w_t}}\right) \max\{2S_i, 2S_j\}. 
    \]
    \end{enumerate}

Finally, applying Lemma~\ref{opt_ward} completes the proof of the theorem.
\end{proof}

Next, we move to the proof of Theorem~\ref{thm:ward-hierarchical}.
Let us first recap what the theorem states.
\setcounter{Theorem}{1}
\begin{Theorem}\label{thm:ward-hierarchical-1}
There exists an absolute constant $c_1>0$ such that the following holds. 
Suppose we are given a sample of size $n$ from a $k$-mixture of spherical Gaussians that satisfy the separation conditions and the sample size lower bound in Theorem~\ref{WardMethod}, 
and suppose that there is an 
underlying hierarchy of the Gaussian components $\mathcal H$ satisfying
\[
\forall \ell\in [s], \quad I \not= J\in \mathcal H_\ell, \quad 
D_{I,J}^-\ge c_1 \sqrt{\nu_{\ell}}\ (D_{I,I}^+ + D_{J,J}^+),
\]
where $\nu_\ell:=\max_{I \not= J\in \mathcal H_\ell}w_I/w_J$. Then, Ward's method recovers 
the underlying hierarchy $\mathcal H$ with probability at least $1-2/k$.


\end{Theorem}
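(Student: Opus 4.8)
The plan is to deduce the theorem from $\tau$ separate applications of Lemma~\ref{opt_ward} to a \emph{single} run of Ward's method --- one application per level of the hierarchy --- and then to stitch the conclusions together using the fact that Ward's method produces a maximal chain of partitions. First I would condition on the same ``good event'' used in the proof of Theorem~\ref{WardMethod}: by Lemmas~\ref{con_bin} and~\ref{con_gaussian} together with the sample-size lower bound, with probability at least $1-2/k$ every component $i$ contributes between $nw_i/2$ and $2nw_i$ sample points, and every sample point lies within Euclidean distance $S_i=\sigma_i(\sqrt d+2\sqrt{\log n})$ of its component mean $\mu_i$. Everything below is deterministic on this event. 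For a part $I\in\mathcal H_\ell$ write $C_I=\bigcup_{i\in I}C_i$ for the union of the sample points of the components in $I$, and $\mu(C_I)$ for its empirical mean.

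Next, fix a level $\ell$ and verify the hypothesis of Lemma~\ref{opt_ward} for the clustering $\{C_I:I\in\mathcal H_\ell\}$. The maximum size ratio among these clusters is at most $4\nu_\ell$ by the multiplicity bound. Since $\mu(C_I)$ is a size-weighted convex combination of the empirical component means (each within $S_{i'}$ of the true $\mu_{i'}$), a short triangle-inequality computation shows that the radius $\max_{x\in C_I}\norm{x-\mu(C_I)}$ is $O(D_{I,I}^+)$, while for $I\neq J$ the mean gap satisfies $\norm{\mu(C_I)-\mu(C_J)}\ge D_{I,J}^- - (D_{I,I}^+ + D_{J,J}^+)$, which the hypothesis $D_{I,J}^-\ge c_1\sqrt{\nu_\ell}(D_{I,I}^+ + D_{J,J}^+)$ forces to be $\Omega(D_{I,J}^-)$. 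Substituting these estimates into the requirement $\norm{\mu(C_I)-\mu(C_J)}>(2+2\sqrt{2\nu})\max_{x\in C_I}\norm{x-\mu(C_I)}$ of Lemma~\ref{opt_ward}, the whole inequality reduces back to $D_{I,J}^-\ge c_1\sqrt{\nu_\ell}(D_{I,I}^+ + D_{J,J}^+)$ once $c_1$ is a large enough absolute constant. Lemma~\ref{opt_ward} then guarantees that the partition $\Pi_\ell:=\{C_I:I\in\mathcal H_\ell\}$ occurs as the $|\mathcal H_\ell|$-cluster partition in the chain produced by Ward's method; at the finest level this is exactly Theorem~\ref{WardMethod}.

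Finally I would glue the levels together. Ward's method produces a nested chain $P_n, P_{n-1},\ldots,P_1$, where $P_m$ has $m$ clusters and $P_{m-1}$ is obtained from $P_m$ by merging two of its parts; by the previous step each $\Pi_\ell$ equals some $P_{m_\ell}$, and since the definition of a hierarchy makes $\Pi_1$ a refinement of $\Pi_2$, a refinement of $\cdots$, a refinement of $\Pi_\tau$, we must have $m_1>m_2>\cdots>m_\tau$, and every partition strictly between $P_{m_{\ell+1}}$ and $P_{m_\ell}$ coarsens $\Pi_\ell$ and refines $\Pi_{\ell+1}$. Hence the dendrogram, cut at the levels $m_1,\dots,m_\tau$, is exactly $\mathcal H$ and every intermediate merge is consistent with it, i.e.\ Ward's method recovers $\mathcal H$; since we only conditioned on the good event, this holds with probability at least $1-2/k$. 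The main obstacle is the bookkeeping in the middle step: controlling the radius of, and separation between, the \emph{composite} clusters $C_I$ purely through $D_{I,I}^+$ and $D_{I,J}^-$, being careful that $\mu(C_I)$ is size-weighted so that the worst case is an internally spread-out $I$, and then tracking the constants tightly enough to land the claimed value of $c_1$ --- the crude bounds sketched above already yield some absolute $c_1$, which suffices for this (``$+$'') form of the statement, but the sharper ``$\max$'' form requires a more careful accounting.
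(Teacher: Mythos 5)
Your proposal is correct and follows essentially the same route as the paper's own proof: condition on the good event from Theorem~\ref{WardMethod}, use $D_{I,I}^+$ and $D_{I,J}^-$ to bound composite-cluster diameters and inter-cluster gaps, bound the size ratios by $4\nu_\ell$, and invoke Lemma~\ref{opt_ward} at each level of the hierarchy. In fact your write-up is more careful than the paper's (which ends with ``Lemma~\ref{opt_ward} naturally completes the proof''), since you explicitly handle the radius about the size-weighted empirical mean $\mu(C_I)$ and the gluing of the per-level partitions into a single nested chain.
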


\begin{proof}[\bf{Proof of Theorem~\ref{thm:ward-hierarchical}}]
By the proof of Theorem 1~\ref{WardMethod},
for any sample size $n$ and Gaussian component $\mathcal G_i$, the quantity $S_i:=\sigma_i(\sqrt d+ 2\sqrt{\log n})$ upper bounds the radius of the corresponding sample cluster, with high probability. Here, the term ``radius'' is defined with respect to the \emph{actual center} $\mu_i$.

The triangle inequality then implies that for any $i\neq j\in [k]$, the distance between a point in the sample cluster of $\mathcal G_i$ and 
that of $\mathcal G_j$ is \emph{at most} $D_{ij}^+:=\norm{\mu_i-\mu_j}_2+S_i+S_j$, i.e., the sum of the distance between the two centers and the radiuses of the two clusters, 
and \emph{at least} $D_{ij}^-:=\norm{\mu_i-\mu_j}_2-S_i-S_j$. 

More generally, for any non-singleton clusters represented by $I, J\subset [k]$, the distance between 
two points $x\in I$ and $y\in J$ is at most
$D_{I,J}^+:=\max_{i\in I, j\in J} D_{ij}^+$, and at least $D_{I,J}^-:=\min_{i\in I, j\in J} D_{ij}^-$. 
In particular, for $I=J$, quantity $D_{I,I}^+$ upper bounds the diameter of cluster $I$. 

Furthermore, by the proof of Theorem~\ref{WardMethod}, given the exact recovery of the $k$ Gaussian components, 
the ratio between the sizes of any two clusters (at any level), say $C_I$ and $C_J$, differs from 
the ratio between their weights, $w_I/w_J$, by a factor of at most $4$. Hence, the quantity $v_\ell$ essentially characterizes the maximum ratio between the sample cluster sizes at level $\ell$ of the hierarchy. 

With the above reasoning, Lemma~\ref{opt_ward} naturally completes the proof of the theorem. 
Intuitively, this shows that if the clusters in each level of the hierarchy are well-separated, Ward's method will be able to recover the correct clustering of that level, and
hence also the entire hierarchy. 
\end{proof}


\section{Optimally of Theorem~\ref{WardMethod}}
\label{theorem_optimality}
Below, we illustrate the hardness of improving the theorem without preprocessing the data, e.g., 
applying an embedding method to the sample points, such as VaDE or PCA.

\paragraph{Separation conditions.}
As many other commonly used linkage methods, the behavior and performance of Ward's method  
depend on only the inter-point distances between the observations. 
For optimal-clustering recovery through Ward's method, 
a natural assumption to make is that the distances between points in the same cluster is 
at most that between disjoint clusters. 
Consider a simple example where we draw $3$ sample points from a uniform 
mixture of two standard $d$-dimensional spherical 
Gaussians $\mathcal G_0$ and $\mathcal G_1$ with mean separated by a distance $\Delta$. 
Then, with a $3/8$ probability, we will draw two sample points from $\mathcal G_0$, say $X, X_0$,
and one sample point from $\mathcal G_1$, say $Y$.
Conditioned on this, the reasoning in Section 2.8 of~\cite{blum2020foundations} implies that, with high probability, 
\[
D_{\mathcal G_0}:=\norm{X-X_0}_2^2\approx 2d\pm \mathcal{O}(\sqrt d) 
\text{ and } 
D_{\mathcal G_1}:=\norm{X-Y}_2^2\approx \Delta^2+2d\pm \mathcal{O}(\sqrt d).
\]
Hence, the assumption that points are closer inside the clusters translates to 
\[
D_{\mathcal G_0}<D_{\mathcal G_1}
\implies 2d\pm \mathcal{O}(\sqrt d)< \Delta^2+2d\pm \mathcal{O}(\sqrt d)
\implies \Delta= \Omega(d^{1/4}).
\]
Furthermore, if one requires the inner-cluster distance to be 
slightly smaller than the inter-cluster distance with high probability, 
say $(1+\epsilon)D_{\mathcal G_0}<D_{\mathcal G_1}$, where $\epsilon>0$ 
is a small constant, 
\[
(1+\epsilon)(2d\pm \mathcal{O}(\sqrt d))< \Delta^2+2d\pm \mathcal{O}(\sqrt d)
\implies
\Delta = \Omega( \sqrt{\epsilon d}).
\]
Therefore, the mean difference between the two Gaussians should exhibit a linear dependence on $\sqrt d$. 

In addition, for a standard $d$-dimensional spherical Gaussian, 
Theorem 2.9 in~\cite{blum2020foundations} states that for any $\beta\le \sqrt d$,
all but at most $3\exp(-\Omega(\beta^2))$ of its probability mass  
lies within $\sqrt d-\beta\le |x|\le \sqrt d +\beta$. 
Hence, if we increase the sample size from $3$ to $n$, correctly separating the points 
by their associated components requires an extra $\mathcal{O}(\sqrt{\log n})$ term in the mean difference. 

\paragraph{Sample-size lower bound.} In terms of the sample size $n$,
Theorem~\ref{WardMethod} requires $n\ge c_0 \log k /w_{\text{\tiny min}}$. 
For Ward's method to recover the underlying $k$-component clustering with high probability,
we must observe at least one sample point from every component of the Gaussian mixture.
Clearly, this means that the expected size of the smallest cluster should be at least $1$,
implying $n\ge 1/w_{\text{\tiny min}}$. 
If we have $\nu = \mathcal{O}(1)$,
this reduces to a weighted version of the coupon collector problem,
which calls for a sample size in the order of 
$k\log k = \Theta((\log k)/w_{\text{\tiny min}})$ by the standard results~\cite{feller1968introduction}.

\section{More Experimental Details}
\label{app:more-experiments}

\smallpar{Network Architecture}  The architecture for the encoder and decoder for both VaDE and VAE are fully-connected layers with size $d$-2000-500-500-$h$ and $h$-500-500-2000-$d$, where $h$ is the hidden dimension for the latent space $\R^h$, and $d$ is the input dimension of the data in $\R^d$. These settings follow the original paper of VaDE \cite{jiang2017variational}. We use Adam optimizer on mini-batched data of size 800 with learning rate 5e-4 on all three real datasets. The choices of the hyper-parameters such as $h$ and $k$ will be discussed in the Sensitivity Analysis.



\smallpar{Computing Dendrogram Purity}
For every set of $N$ points sampled from the dataset (e.g. we use $N = 2048$ for MNIST, CIFAR-25), we compute the exact dendrogram purity using the formula \ref{eq:dp}. We then report the mean and standard variance of the results by repeating the experiments 100 times.


\smallpar{Sensitivity Analysis}  We check the sensitivity of VAE and VaDE to its hyperparameters $h$ and $k$ in terms of DP and MW. We also check the sensitivity of rescaling factor $s$ applied after the VaDE embedding. The details are presented in Figure \ref{fig:sens:VaDE}, \ref{fig:sens:VAE}, and~\ref{fig:sens:k}. 


 \begin{figure*}
 \begin{center}
 \begin{subfigure}[t]{0.45\textwidth}
 \centering
 \caption{Sensitivity of VaDE to the hidden dimension $h$ while we fix $k=10$}
 \label{fig:sensitivity-h}
 \includegraphics[width=\textwidth]{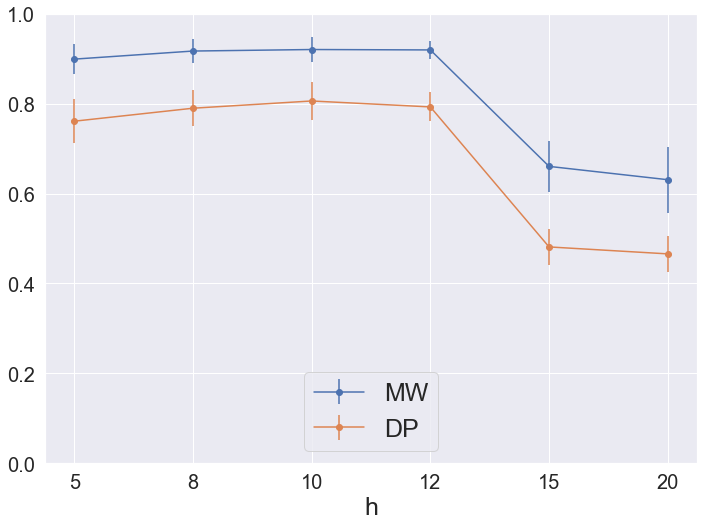}
 \end{subfigure}\hspace{2em}
  \begin{subfigure}[t]{0.45\textwidth}
 \caption{Sensitivity of VaDE to the number of latent clusters $k$ while we fix $h=10$}
 \label{fig:sensitivity-k}
 \includegraphics[width=\textwidth]{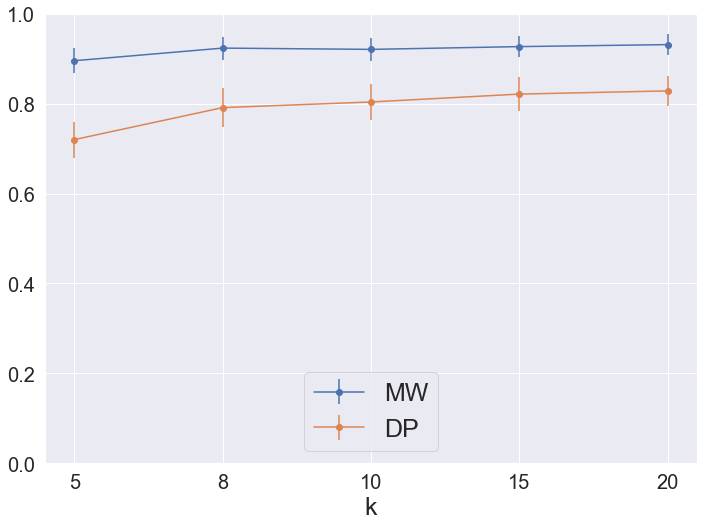}
 \end{subfigure}%
 \end{center}
   \caption{Evaluating the sensitivity of VaDE on MNIST to its hyper-parameters $h$, which denotes the dimensionality of the latent space embedding, and $k$, which denotes the number of GMM components used for the latent space. We plot the Moseley-Wang objective (MW) and dendrogram purity (DP) as we vary either $h$ or $k$. Overall, we see that as long as $h$ is between 5 and 12, then the results are stable. Similarly, $k$ just needs to be at least 8. Therefore, while these parameters are important, the VaDE model is not very sensitive. Note that in our experiments for real data we fixed $h=10$ and for synthetic data we set $h$ to be the dimensionality of the BTGM (either $3,4,$ or $5$). For real/synthetic data we set $k$ to be the number of ground truth clusters (i.e., $k=10$ for MNIST).}
 \label{fig:sens:VaDE}
 \end{figure*}

 \begin{figure*}
 \begin{center}
 \begin{subfigure}[t]{0.45\textwidth}
 \centering
 \caption{Sensitivity of VAE to the hidden dimension $h$ with the same architecture as VaDE}
 \label{fig:VAE-sensitivity-h}
 \includegraphics[width=\textwidth]{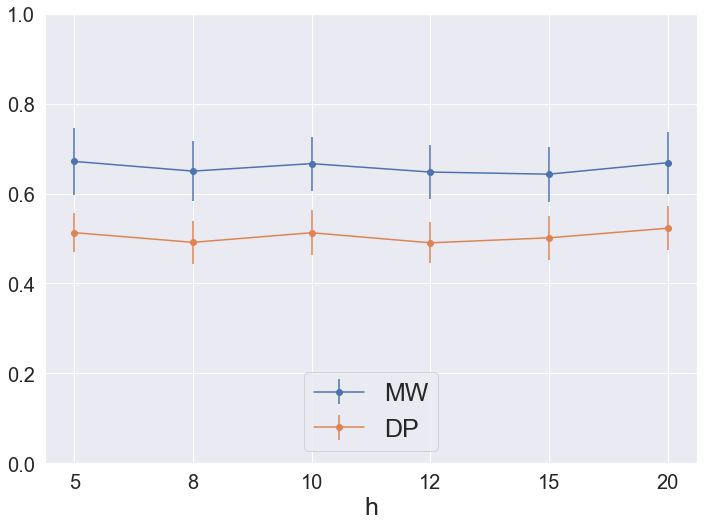}
 \end{subfigure}\hspace{2em}
  \begin{subfigure}[t]{0.45\textwidth}
 \caption{Sensitivity of VaDE  to the rescaling factor $s$ while we fix $h = 10$ and $k=10$}
 \label{fig:sensitivity-s}
 \includegraphics[width=\textwidth]{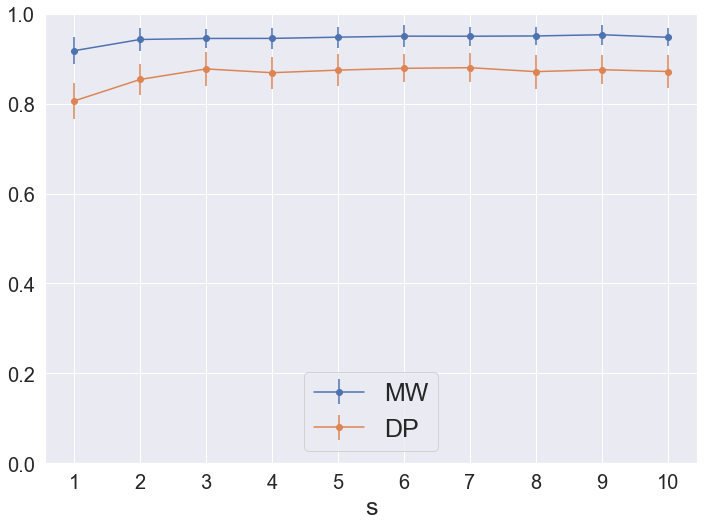}
 \end{subfigure}%
 \end{center}
   \caption{On the left, we evaluate the sensitivity of VAE on MNIST to its hyper-parameters $h$, which denotes the dimensionality of the latent space embedding. On the right, we evaluate the sensitivity of VaDE + Trans to its hyper-parametere $s$. We plot the Moseley-Wang objective (MW) and dendrogram purity (DP) as we vary either $h$ or $s$. We see that the performance of VAE is not sensitive to its hidden dimension size $h$. But overall, the VAE does not perform well with respect to DP and MW. As for the rescaling experiment, we see that the effect of $s$ becomes stable when $s \geq 3$. Note that in the real dataset experiments we set $s = 3$.}
 \label{fig:sens:VAE}
 \end{figure*}

\smallpar{Comparing Linkage-Based Methods} Table~\ref{table:vary_method} shows the DP and MW results for several linkage-based variations, depending on the function used to determine cluster similarity. Overall, we see that Ward's method performs the best on average. However, in some cases, the other methods do comparably.

\begin{table}
\centering
\caption{Dendrogram Purity and Moseley-Wang's objective using different HAC linkage methods on VaDE + Transformed latent space.}
\label{table:vary_method}
\centering
\begin{tabular}{l c c | c c}
\hline
\toprule
& Dendro. Purity & M-W objective & Dendro. Purity & M-W objective\\
\midrule
Single & 0.829 $\pm$ 0.038 & 0.932 $\pm$ 0.023 & 0.260 $\pm$ 0.005 & 0.419 $\pm$ 0.006\\
Complete & 0.854 $\pm$ 0.040 & 0.937 $\pm$ 0.024 & 0.293 $\pm$ 0.008 & 0.451 $\pm$ 0.016\\
Centroid & 0.859 $\pm$ 0.035 & \bf{0.949 $\pm$ 0.019} & 0.284 $\pm$ 0.008 & 0.453 $\pm$ 0.009\\
Average & 0.866 $\pm$ 0.035 & 0.948 $\pm$ 0.023 & 0.295 $\pm$ 0.005 & 0.462 $\pm$ 0.006\\
Ward & \bf{0.870 $\pm$ 0.031} & 0.948 $\pm$ 0.025 & \bf{0.300 $\pm$ 0.008} & \bf{0.465 $\pm$ 0.011}\\
\bottomrule
\end{tabular}
\begin{tabular}{c c c c c c c c c c c c c c}
& & & &MNIST &  & & & & & & & & CIFAR-25-s
\end{tabular}
\end{table}

\begin{figure*}[t!]
    \centering
    \begin{subfigure}[t]{0.3\textwidth}
        \centering
        \includegraphics[height=1in]{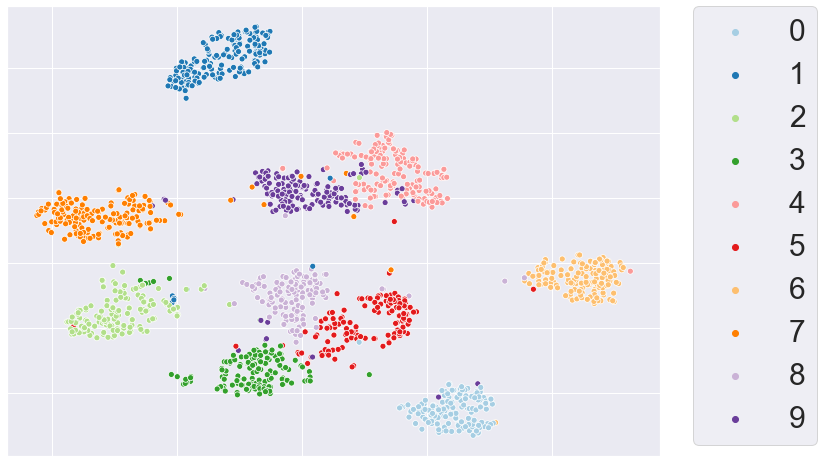}
        \caption{$k = 10$}
        \label{fig:shift-orig2}
    \end{subfigure}%
    ~ 
    \begin{subfigure}[t]{0.3\textwidth}
        \centering
        \includegraphics[height=1in]{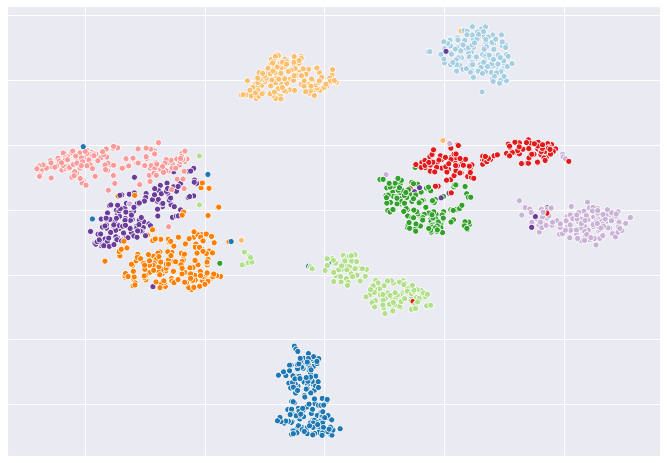}
        \caption{$k = 15$}
        \label{fig:shift-pca2}
    \end{subfigure}
    ~
    \begin{subfigure}[t]{0.3\textwidth}
        \centering
        \includegraphics[height=1in]{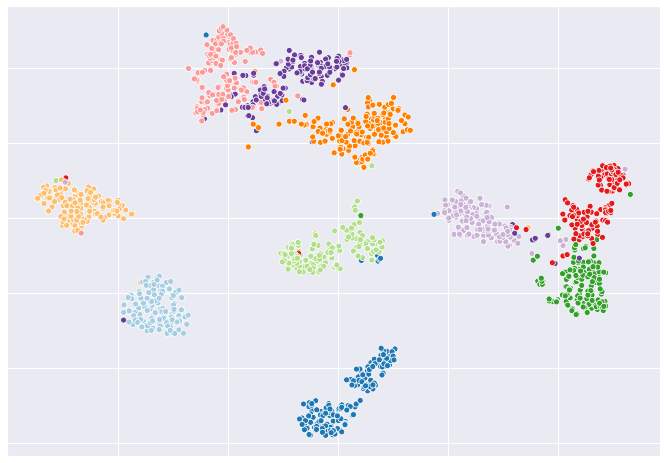}
        \caption{$k = 20$}
        \label{fig:shift-vade2}
    \end{subfigure}
\caption{tSNE visualization of the VaDE latent space with different numbers of latent clusters $k$. We can see that the latent space of VaDE becomes more separated as the number of clusters $k$ increases. When $k = 20$, many of the original 10 clusters of digit further divide into 2 smaller clusters, which in total, form the 20 clusters learned by VaDE.}
\label{fig:sens:k}
\end{figure*}

\smallpar{Dataset Details} We conduct the experiments on our synthetic data from BTGM as well as real data benchmarks: Reuters~\cite{Amini09RCV1}, MNIST~\cite{MNIST} and CIFAR-100~\cite{CIFAR100}, 20 Newsgroup~\cite{Lang95newsgroup}. In the experiments, we only use 25 classes from CIFAR-100 that fall into one of the five superclasses "aquatic animals", "flowers", "fruits and vegetables", "large natural outdoor scenes" and "vehicle1" \footnote{The information of these superclasses can be found in \url{https://www.cs.toronto.edu/~kriz/cifar.html}}, we call it CIFAR-25. Below we also provide further details on the  \textbf{Digits} dataset of handwritten digits \footnote{\url{https://archive.ics.uci.edu/ml/datasets/Optical+Recognition+of+Handwritten+Digits}}. More details about the number of clusters and other statistics can be found in Table~\ref{tab:datasets}. 

\smallpar{Rescaling explained} The motivation of rescaling transformation can be explained using
Figure~\ref{fig:rescaling_explained}. (a) shows the tSNE visualization of the VaDE embedding before and after the rescaling transformation. As we can see, each cluster becomes more separated. In (b), we visualize the inter-class distance matrices before and after the transformation. We use blue and black bounding boxes to highlight few superclusters revealed in the VaDE embedding. These high level structures remain unchanged after the rescaling, which is a consequence of property (ii) in Section~\ref{sec:approach}. 

\smallpar{Separability assumption} It's true that such a property does not hold for the real datasets {\bf before the embedding}. But the assumptions of our theorems are not about the real data themselves, but about the transformed data after the VaDE embedding. We observe that VaDE increases the separation of data from different clusters. Indeed, we verify this for MNIST in Figure \ref{fig:separation} below. We provide further justification for using VaDE via two new theorems that relate separability and Ward's methods.

\begin{table}[th]
\caption{Dataset Details.}
    \centering
\begin{tabular}{c c c c c c}
\toprule
& \textsc{Digits}& \textsc{Reuters}& \textsc{MNIST} & \textsc{20newsgroup} & \textsc{CIFAR-25}\\
\midrule 
\# ground truth clusters & 10 & 4 & 10 & 20 & 25 \\
\# data for training & 6560 & 680K & 60K & 18K & 15K \\
\# data for evaluation & 1000 & 10k & 10k & 18k & 15k \\
Dimensionality & 64 & 2000 & 768 & 2000 & 3072\\
Size of hidden dimension & 5 & 10 & 10 & 20  & 20\\
\bottomrule
\end{tabular}
\label{tab:datasets}
\end{table}

\subsection{Digits Dataset}

There are few experimental results that evaluate dendrogram purity for the \textbf{Digits} dataset in  previous papers.  First we experiment using the same settings and same test set construction for \textbf{Digits} as in  \cite{monath2019gradient}. In this case, VaDE+Ward achieves a DP of 0.941, substantially improving the previous state-of-the-art results for dendrogram purity from existing methods: the DP of~\cite{monath2019gradient} is 0.675, the DP of PERCH~\cite{kobren2017hierarchical}
is 0.614, and the DP of BIRCH \cite{BIRCH1996} is 0.544.
In other words, our approach of VaDE+Ward leads to 26.5 point increase in DP for this simple dataset.

Surprisingly, we find that the test data they used is quite easy, as the DP and MW numbers are substantially better compared to using a random subset as the test set. Therefore, we follow the same evaluation settings as in Section~\ref{sec:experiments} and randomly sampled 100 samples of 1000 data points for evaluation. The results are shown in Table \ref{table:Digits}. The contrast between the previous test data (easier) versus random sampling (harder) further supports our experimental set-up in the main paper.

\begin{table}[ht]
\centering
\caption{Dendrogram Purity and Moseley-Wang's objective using different embedding methods in \textbf{Digits} dataset. Overall, VaDE + Trans. only provides a minor improvement in DP. Using Ward on the original space already achieves decent results.}
\centering
\begin{tabular}{l c c }
\hline
\toprule
& Dendro. Purity & M-W objective \\
\midrule
Original & 0.820 $\pm$ 0.031 & \bf{0.846 $\pm$ 0.029} \\
PCA & 0.823 $\pm$ 0.028 & 0.844 $\pm$ 0.027\\
VaDE & 0.824 $\pm$ 0.026 & 0.840 $\pm$ 0.024 \\
VaDE + Trans. & \bf{0.832 $\pm$ 0.029} & 0.845 $\pm$ 0.024 \\
\bottomrule
\end{tabular}
\label{table:Digits}
\end{table}

%
%


\begin{figure*}[t!] 
    \centering
    \begin{subfigure}[t]{0.45\textwidth}
        \centering
        \includegraphics[height=1.3in]{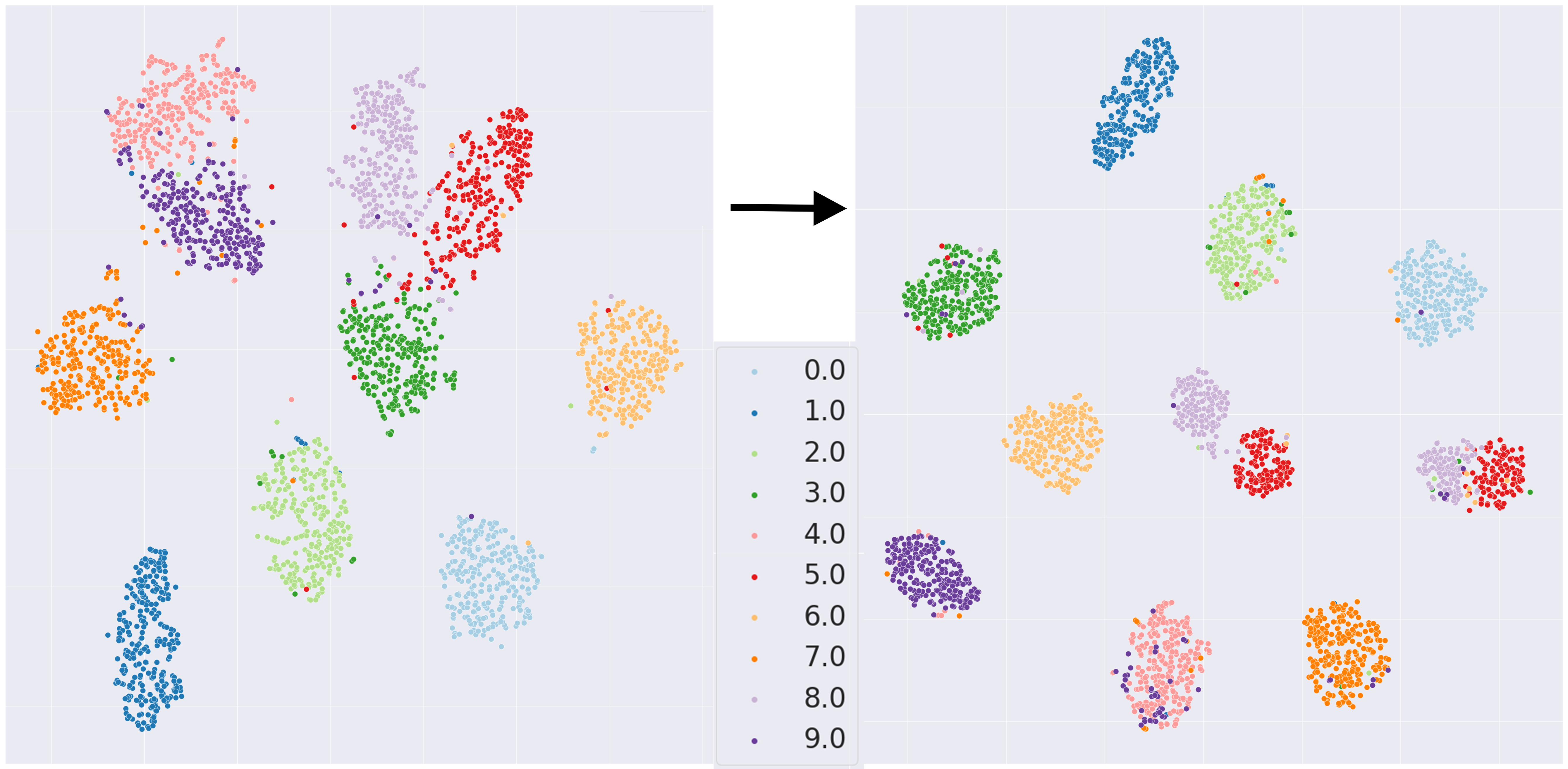}
        \caption{}
        \label{fig:mnist-rescaling-tSNE}
    \end{subfigure}
    ~
    \begin{subfigure}[t]{0.45\textwidth}
        \centering
        \includegraphics[height=1.3in]{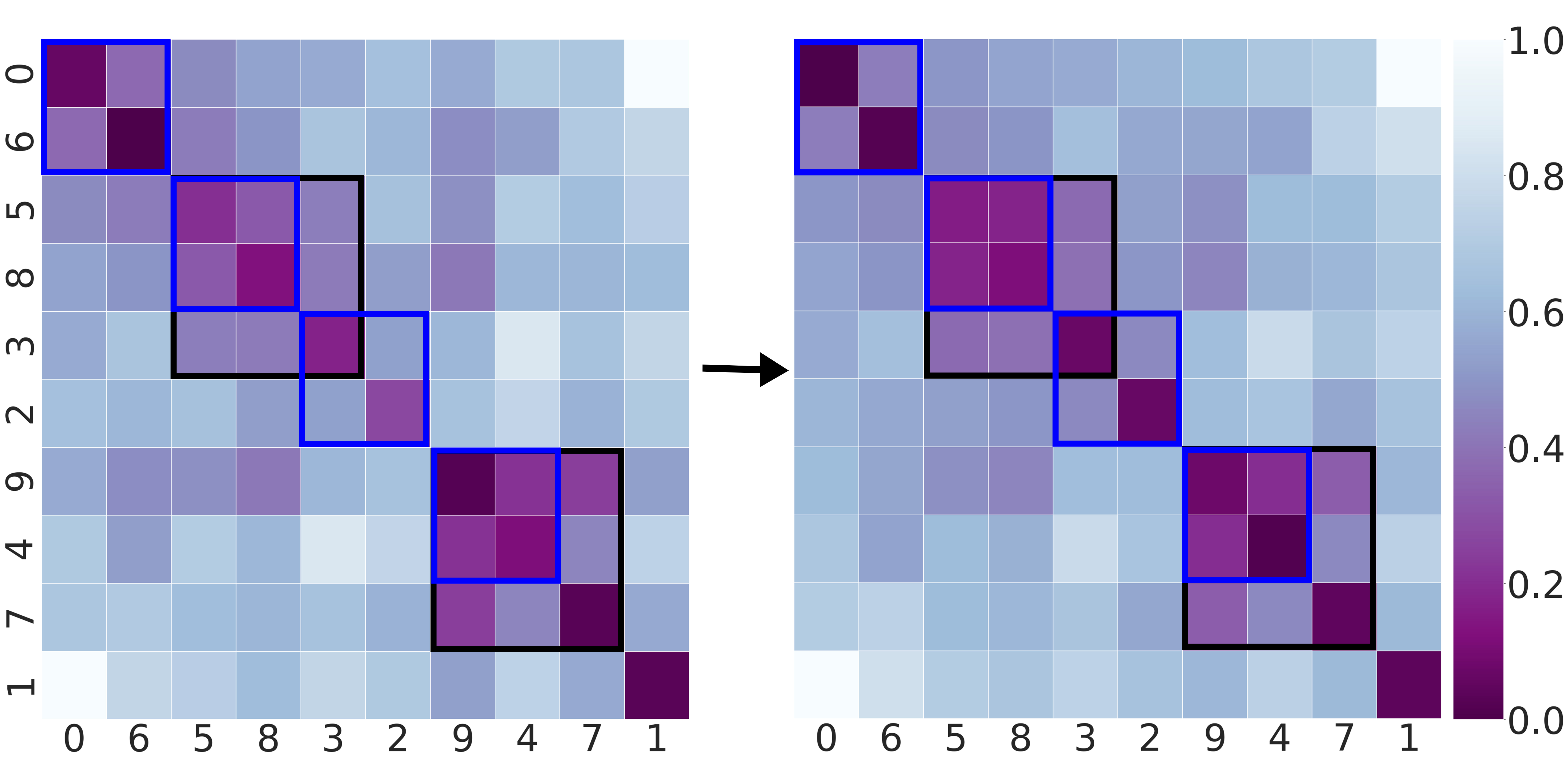}
        \caption{}
        \label{fig:mnist-rescaling-matrix}
    \end{subfigure}
\caption{(a) tSNE visualization of the embedded data for MNIST before and after the rescaling transformation, (b) shows the inter-class distance matrix before and after the rescaling transformation}
\label{fig:rescaling_explained}
\end{figure*}

\begin{figure*}[!h]
    \centering
    \begin{subfigure}[t]{0.48\textwidth}
        \centering
        \includegraphics[height=1.2in]{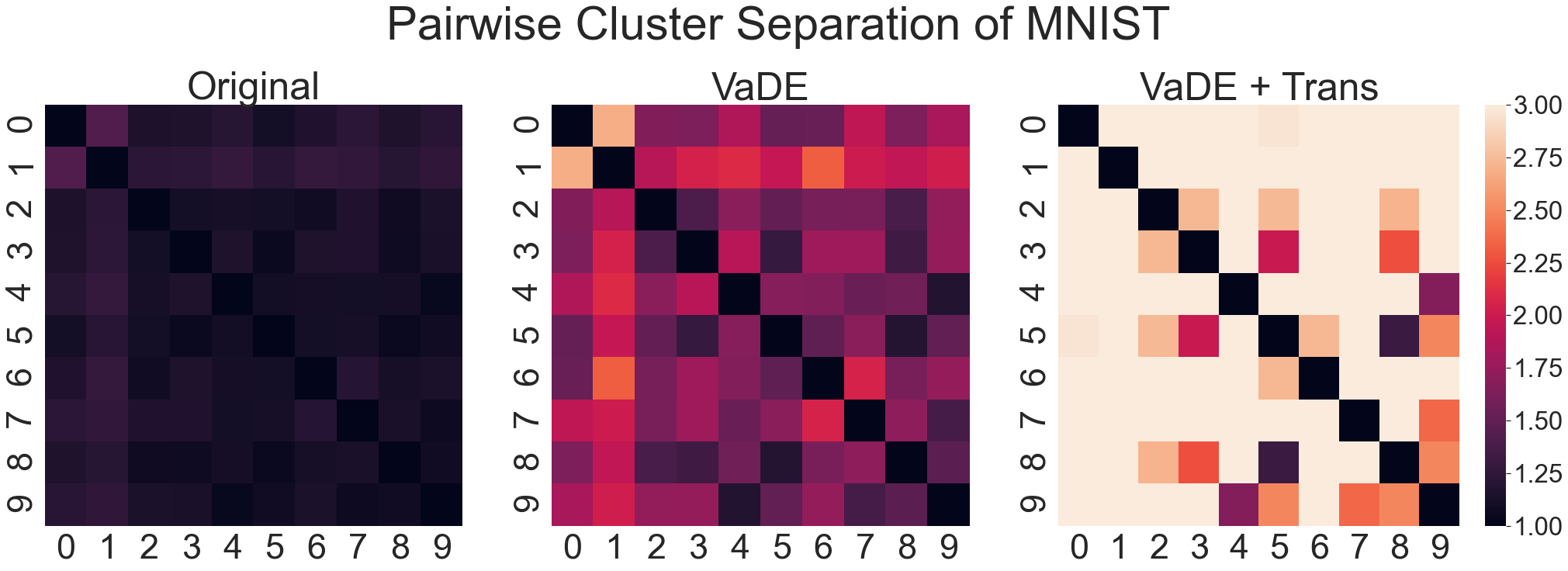}
        \label{fig:MNIST-separation}
    \end{subfigure}
    ~
    \begin{subfigure}[t]{0.48\textwidth}
        \centering
        \includegraphics[height=1.2in]{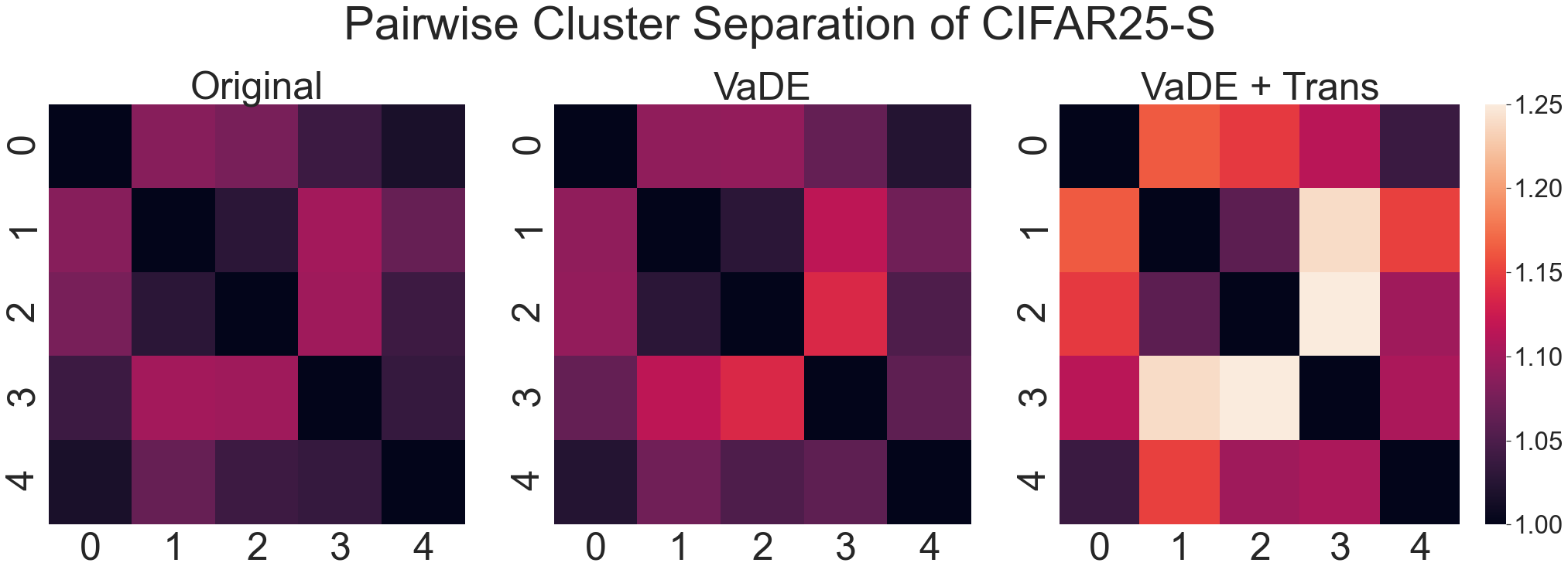}
        \label{fig:CIFAR-separation}
    \end{subfigure}
\caption{ Cluster separation is measured by \tiny $\displaystyle \sum_{x \in \mathcal{C}_i}\sum_{y \in \mathcal{C}_j} \norm{x - y} \left(\sum_{x,y \in \mathcal{C}_i} \norm{x-y}\right)^{-1/2} \left(\sum_{x,y \in \mathcal{C}_j} \norm{x-y} \right)^{-1/2}$.}
\label{fig:separation}
\end{figure*}
~\vfill

\end{document}


%

%

\onecolumn
\aistatstitle{Instructions for Paper Submissions to AISTATS 2021: \\
Supplementary Materials}

\section{FORMATTING INSTRUCTIONS}

To prepare a supplementary pdf file, we ask the authors to use \texttt{aistats2021.sty} as a style file and to follow the same formatting instructions as in the main paper.
The only difference is that the supplementary material must be in a \emph{single-column} format.
You can use \texttt{supplement.tex} in our starter pack as a starting point, or append the supplementary content to the main paper and split the final PDF into two separate files.

Note that reviewers are under no obligation to examine your supplementary material.

\section{MISSING PROOFS}

The supplementary materials may contain detailed proofs of the results that are missing in the main paper.

\subsection{Proof of Lemma 3}

\textit{In this section, we present the detailed proof of Lemma 3 and then [ ... ]}

\section{ADDITIONAL EXPERIMENTS}

If you have additional experimental results, you may include them in the supplementary materials.

\subsection{The Effect of Regularization Parameter}

\textit{Our algorithm depends on the regularization parameter $\lambda$. Figure 1 below illustrates the effect of this parameter on the performance of our algorithm. As we can see, [ ... ]}

\vfill